\documentclass{article}
\usepackage{iclr2027/iclr2027_conference,times}

\usepackage[T1]{fontenc}
\usepackage{microtype}
\usepackage{graphicx}
\usepackage{subcaption}
\usepackage{booktabs} % for professional tables
\usepackage{array}
\usepackage{longtable}
\usepackage{needspace}

\usepackage{hyperref}
\usepackage{url}

\usepackage{amsmath}
\usepackage{amssymb}
\usepackage{mathtools}
\usepackage{amsthm}

\usepackage[capitalize,noabbrev]{cleveref}

\theoremstyle{plain}
\newtheorem{theorem}{Theorem}[section]
\newtheorem{proposition}[theorem]{Proposition}

\newtheorem{corollary}[theorem]{Corollary}
\theoremstyle{definition}

\theoremstyle{remark}

\usepackage[textsize=tiny]{todonotes}

\usepackage{algorithm}
\usepackage{algorithmic}
\usepackage{xcolor}
\usepackage{colortbl}
\usepackage{multirow}
\usepackage{enumitem}
\usepackage{listings}
\usepackage{float}
\usepackage{wrapfig}
\usepackage{fvextra}
\DefineVerbatimEnvironment{MintedVerbatim}{Verbatim}{}
\input{minted-cache/default.style.minted}
\renewcommand{\PYGZsq}{\textquotesingle}
\renewcommand{\PYGZhy}{\mbox{-}}
\VerbatimPygments{\PYG}{\PYG}
\usepackage{tcolorbox}
\tcbuselibrary{breakable,skins}

\definecolor{bestgray}{gray}{0.9}

\title{Fusion is the New Mutation: Bandit-Guided Evolution on Workflow Graphs}

\author{%
\parbox{\dimexpr\textwidth-2\tabcolsep\relax}{\raggedright\fontsize{9}{10}\selectfont
\textbf{Zhiwei Shang$^{1}$, Jiahang Sun$^{1}$, Mingrong Gong$^{2}$, Mingze Kong$^{1}$,}\\
\textbf{Zikun Qu$^{1}$, Pingchen Lu$^{1}$, Junhao Dong$^{2}$, Zhipiao Liu$^{3}$,}\\
\textbf{Hongwei Yang$^{3}$, Guoqing Xie$^{3}$, Yao Shu$^{4}$, Zhongxiang Dai$^{1}$}\\
\normalfont\fontsize{8}{9}\selectfont
$^{1}$School of Data Science, The Chinese University of Hong Kong, Shenzhen, China\\
$^{2}$College of Computing and Data Science, Nanyang Technological University, Singapore\\
$^{3}$Huawei Technologies Co., Ltd., China\\
$^{4}$Artificial Intelligence Thrust, The Hong Kong University of Science and Technology (Guangzhou), China\\
Correspondence to: Zhongxiang Dai \texttt{<daizhongxiang@cuhk.edu.cn>}
}}

\hypersetup{pdftitle={Fusion is the New Mutation: Bandit-Guided Evolution on Workflow Graphs},
 pdfauthor={Zhiwei Shang, Jiahang Sun, Mingrong Gong, Mingze Kong, Zikun Qu, Pingchen Lu, Junhao Dong, Zhipiao Liu, Hongwei Yang, Guoqing Xie, Yao Shu, Zhongxiang Dai},
 pdfkeywords={Workflow Optimization, Multi-Armed Bandits, Large Language Models, Evolutionary Algorithms}}
\iclrfinalcopy
\makeatletter
\renewcommand{\@maketitle}{%
  \vspace*{-42pt}%
  \vbox{\hsize\textwidth
    {\LARGE\scshape\@title\par}%
    \vskip 7pt
    \noindent\hspace*{\tabcolsep}\@author\par
    \vskip 3pt}}
\makeatother

\begin{document}
\maketitle
\suppressfloats[t]

\begin{abstract}
Automated agentic workflow optimization relies on costly evaluations, making it essential to allocate a limited evaluation budget effectively. Multi-parent fusion can reuse designs from previously discovered workflows, but identifying promising parent combinations requires learning from limited fusion feedback. We introduce \textbf{DAGO} (Directed Acyclic Graph Optimization), a contextual-bandit-guided framework that learns which parent workflows to fuse under a limited evaluation budget. DAGO formulates each candidate parent combination as an \textbf{arm}, represented by pretrained embeddings of its constituent workflows' code and prompts. A diagonal LinUCB policy learns a shared reward model across arms and balances exploitation of arms with high predicted offspring quality against uncertainty-driven exploration. After an arm is selected, an LLM generates a child workflow through summary-guided fusion, and the child's validation score serves as the reward for updating the bandit. A shared directed acyclic graph maintains discovered workflows and their multi-parent lineage, providing an expanding pool of parents for subsequent arm proposals. Across six benchmarks covering mathematical reasoning, code generation, and question answering, DAGO achieves the highest macro-average score among the evaluated baselines. Under matched validation-evaluation budgets, it improves over AFlow from 80.3 to 81.7 while reducing aggregate search expenditure by 11.2\%. Ablation studies show that LinUCB-guided arm selection outperforms both random selection and its exploration-free variant, supporting the value of feedback-driven selection and exploration--exploitation balance.
\end{abstract}

\begin{figure*}[t]
\centering
\includegraphics[width=0.96\textwidth]{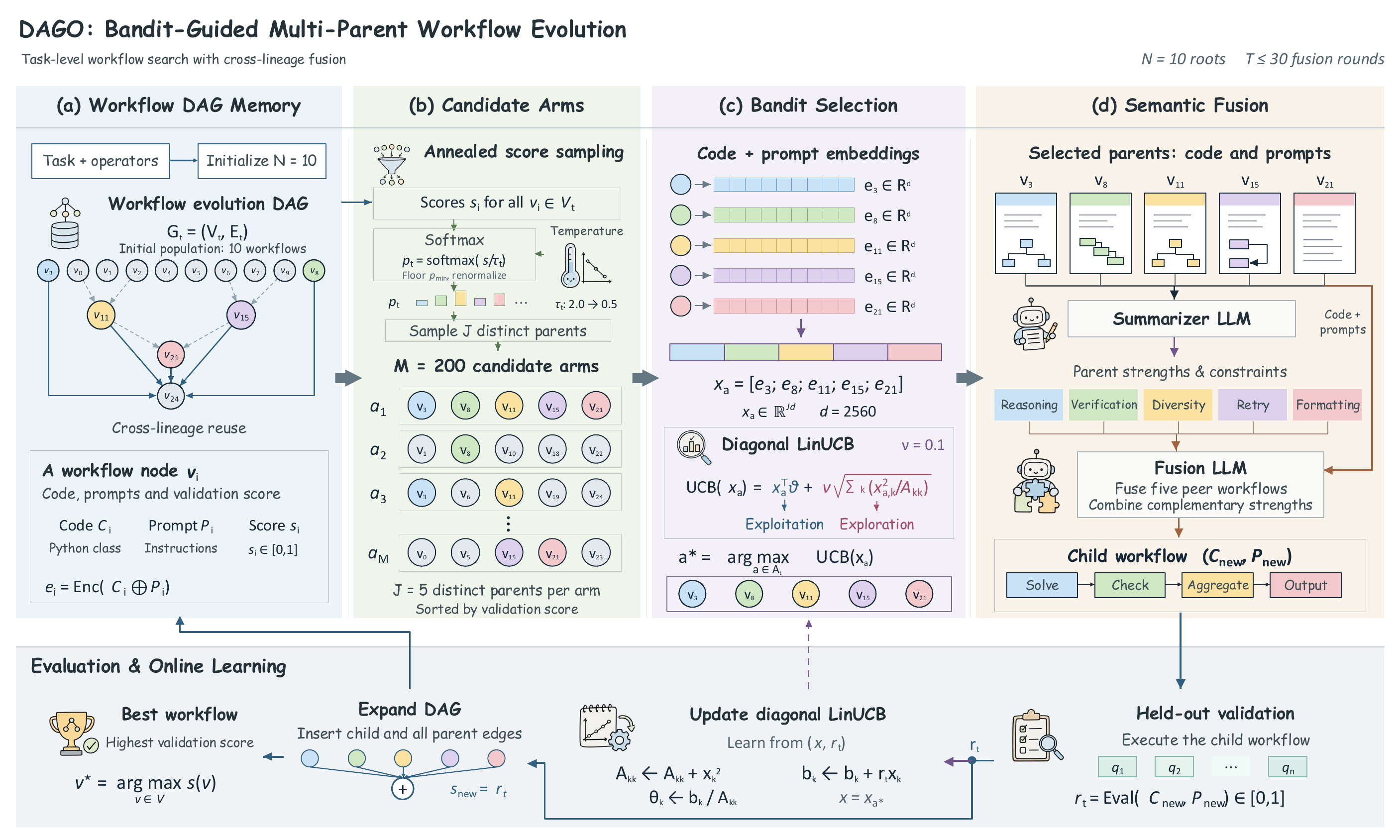}
\captionsetup{font=footnotesize}
\caption{\textbf{Overview of the DAGO framework.}
\textbf{(a)} The DAG stores workflows, validation scores, and parent lineage.
\textbf{(b)} Annealed score sampling proposes parent combinations.
\textbf{(c)} Diagonal LinUCB selects a combination from code-and-prompt embeddings.
\textbf{(d)} Summary-guided fusion generates a child.
Validation feedback updates LinUCB and the DAG; the highest-scoring workflow is returned.}
\label{fig:system}
\end{figure*}

\section{Introduction}
\label{sec:introduction}

Agentic workflows coordinate large language model (LLM) calls, tools, and control-flow logic to solve complex tasks.
Automating their design turns workflow construction into a search problem over executable programs.
ADAS generates new agentic systems by conditioning on an archive of previous discoveries~\citep{hu2025automated}, while AFlow explores workflow code through a variant of Monte Carlo Tree Search~\citep{zhang2025aflow}.
In these settings, assessing a candidate requires executing its LLM calls and tools on validation examples, making search evaluations costly.
Under a limited evaluation budget, an optimizer must therefore decide not only how to generate new workflows, but also which generation decisions are worth evaluating.

Multi-parent fusion offers a way to reuse useful designs from previously discovered workflows.
For example, a workflow with effective problem decomposition may provide useful guidance for improving another workflow with a strong verification procedure.
Existing approaches already explore workflow recombination: EvoFlow uses LLM-based crossover~\citep{zhang2025evoflow}, and MermaidFlow introduces constraint-preserving crossover over workflow graphs~\citep{zheng2025mermaidflow}.
However, an effective fusion operator does not by itself determine which parents should be fused.
High individual validation scores do not guarantee that a parent combination will produce a strong child, and the number of possible combinations grows combinatorially as more workflows are discovered.
Evaluating every combination is impractical, while selecting parent combinations at random does not use past fusion outcomes to guide future choices.
The central challenge is thus to learn which combinations merit the next expensive fusion and evaluation.

This motivates our research question: \emph{Under a limited evaluation budget, how can we learn from past fusion outcomes to select promising parent workflow combinations?}
We introduce \textbf{DAGO} (Directed Acyclic Graph Optimization), a contextual-bandit-guided framework that makes this selection problem explicit.
An \emph{arm} is a selectable action in a bandit problem; here, it corresponds to a candidate combination of parent workflows to fuse.
The child's validation score serves as the reward, rather than an aggregate of the parents' existing scores.
This distinction directs learning toward the quality of the generated workflow, rather than merely favoring parents that already perform well individually.

To learn from limited feedback, DAGO uses a \emph{shared linear reward model} across arms instead of learning an independent reward estimate for every combination.
Each workflow is represented by pretrained embeddings of its code and prompts, and an arm's features concatenate its parent embeddings in descending order of validation score.
At each iteration, annealed score-based sampling proposes a manageable set of candidate arms without enumerating the full combination space.
A diagonal LinUCB policy~\citep{li2010contextual,chu2011contextual} then selects an arm by combining predicted child-workflow quality with an uncertainty-driven exploration bonus.
The shared representation allows feedback from evaluated arms to inform the ranking of newly proposed arms, while the exploration bonus encourages evaluating alternatives beyond those with the highest current predictions.
The diagonal approximation keeps arm scoring and online updates lightweight.
Thus, parent scores guide \emph{candidate proposal}, whereas observed fusion rewards guide \emph{arm selection}, allocating evaluation opportunities through an explicit exploration--exploitation mechanism.

DAGO embeds this decision process in a workflow-generation loop.
A shared directed acyclic graph (DAG) stores discovered workflows and their multi-parent generation lineage, providing an expanding pool of parents for subsequent arm proposals.
This DAG records the search history, not the execution graph within an individual workflow.
After the bandit selects an arm, an LLM summarizes the parents' strengths and constraints and generates a child through summary-guided fusion, starting from the strongest parent and applying targeted modifications informed by the selected workflows.
The child's validation score updates the bandit, and the child is added to the DAG.
At the end of search, DAGO returns the workflow with the highest validation score for reuse on the target task; arm selection occurs during optimization rather than separately for each test query.

\begin{figure}[!t]
\centering
\captionsetup{font=footnotesize}
\begin{subfigure}[b]{0.63\textwidth}
\centering
\includegraphics[width=\linewidth]{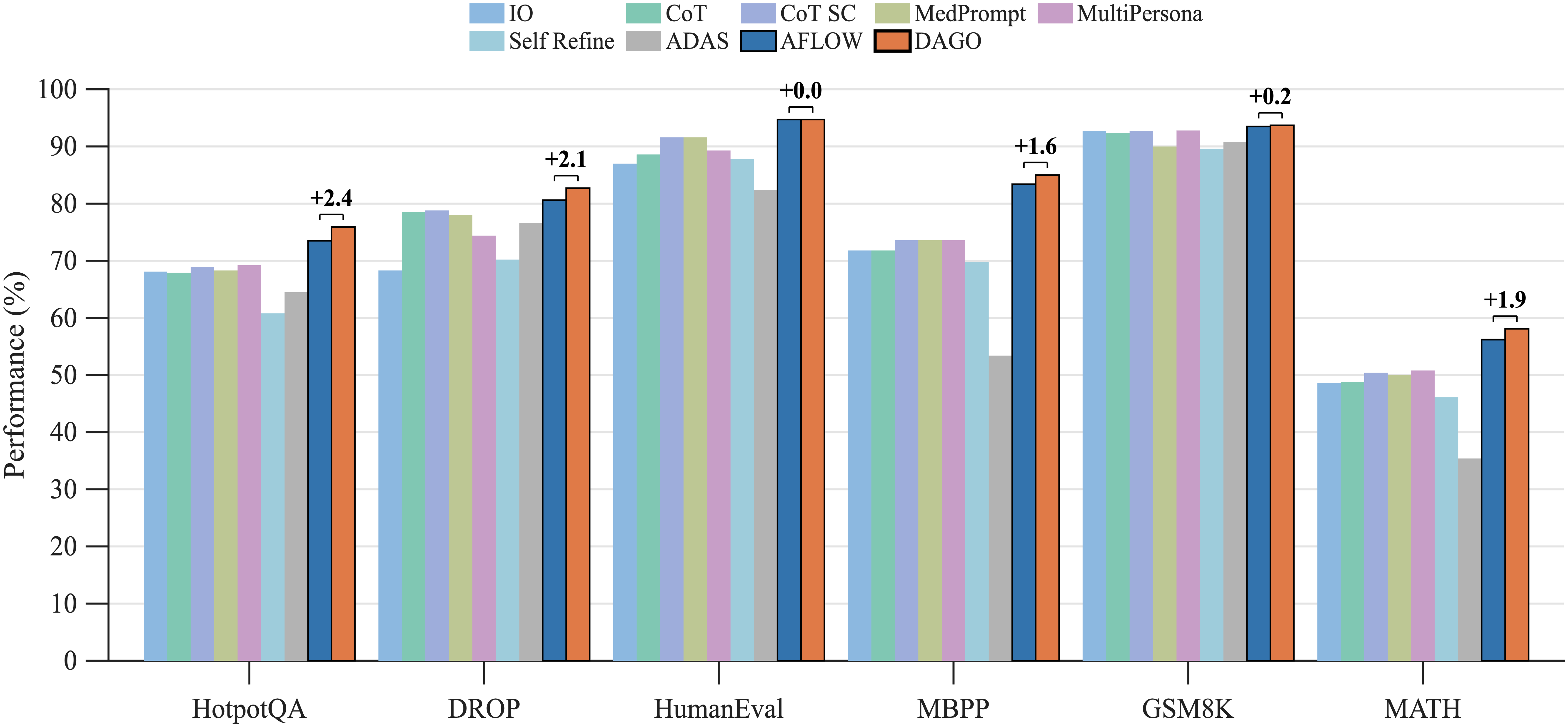}
\caption{Performance across benchmarks.}
\end{subfigure}\hfill
\begin{subfigure}[b]{0.35\textwidth}
\centering
\includegraphics[width=\linewidth]{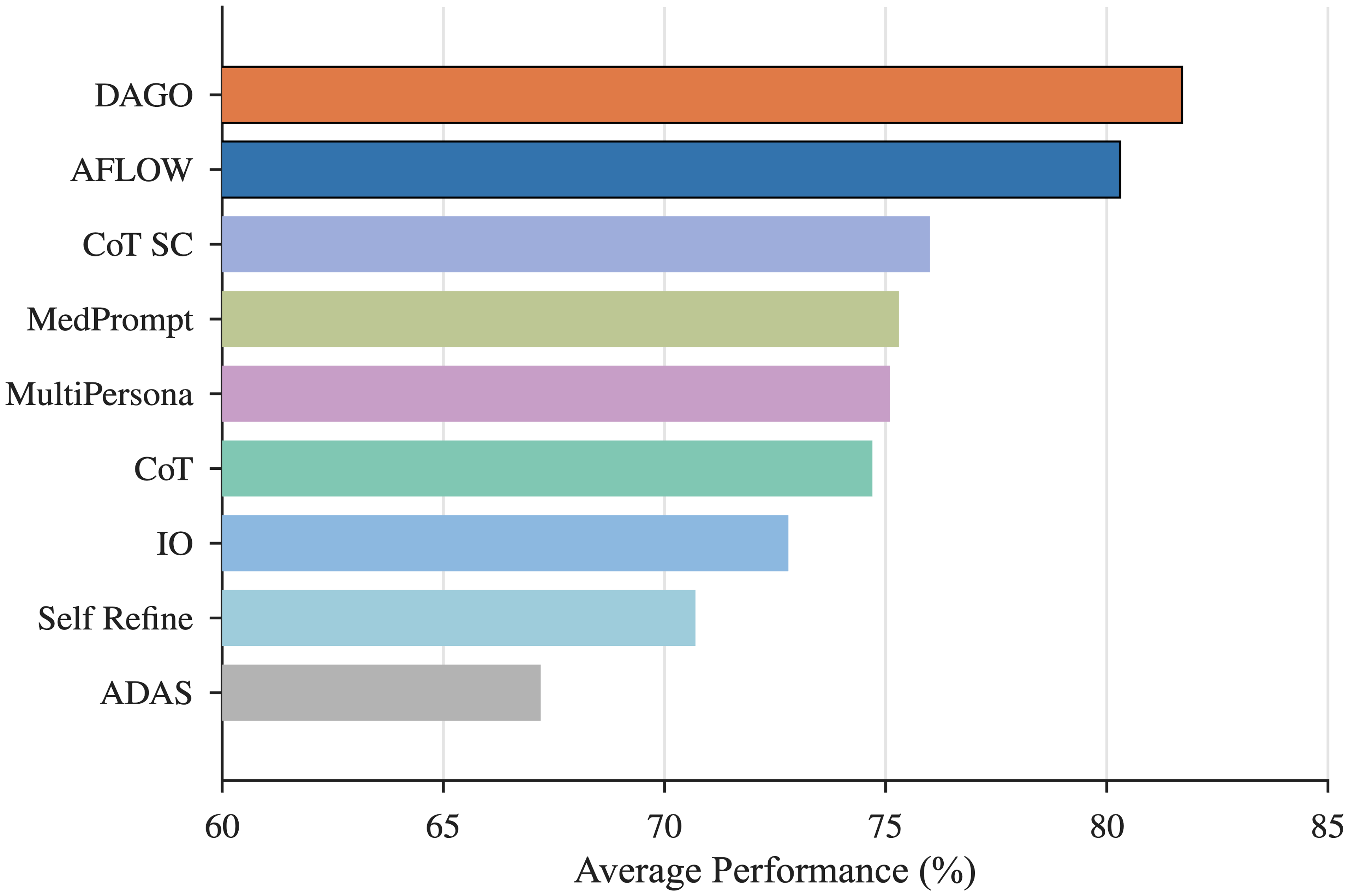}
\caption{Macro-average ranking.}
\end{subfigure}
\caption{\textbf{Overall performance across benchmarks and average ranking.}
Selected methods from Table~\ref{tab:main}, with test scores averaged over three executions.
(a) Solve rate / pass@1 / F1 for math / code / QA; brackets show DAGO's gains over AFlow in points.
(b) Six-task macro averages.}
\label{fig:main_results}
\end{figure}

We evaluate DAGO on six benchmarks spanning mathematical reasoning (GSM8K and MATH), code generation (HumanEval and MBPP), and question answering (HotpotQA and DROP).
DAGO achieves the highest macro-average score among the evaluated baselines.
Under the same validation-evaluation budget cap as AFlow, it improves the macro-average score from $80.3$ to $81.7$ while reducing aggregate search expenditure by $11.2\%$ across the six benchmarks.
Arm-selection ablations show that the default LinUCB policy achieves higher mean scores than both random selection and an exploration-free variant on all three tested benchmarks, supporting the value of feedback-guided selection and the exploration bonus.
Our contributions are threefold:
\begin{enumerate}[nosep]
    \item \textbf{A bandit formulation of workflow fusion decisions.}
    We model each parent workflow combination as an arm whose reward is the generated child's validation score, making feedback-driven parent selection central to workflow search under limited evaluation budgets.

    \item \textbf{A shared, uncertainty-aware arm-selection mechanism.}
    We combine pretrained code-and-prompt representations, score-based candidate proposal, and diagonal LinUCB to reuse fusion feedback across arms and guide exploration without exhaustive combination evaluation.

    \item \textbf{An integrated framework with empirical evaluation.}
    We integrate arm selection with DAG-based workflow memory and summary-guided fusion, and evaluate performance, search expenditure, and the effects of learned selection and exploration across reasoning, coding, and question-answering tasks.
\end{enumerate}

\section{Related Work}
\label{sec:related_work}

\paragraph{Automated Workflow Optimization and Recombination.}
Automated workflow optimization spans prompt refinement and
structural search.
DSPy~\citep{khattab2024dspy},
TextGrad~\citep{yuksekgonul2025optimizing}, and
MIPRO~\citep{opsahl-ong-etal-2024-optimizing} improve prompts in
modular LLM programs, while GPTSwarm~\citep{zhuge2024gptswarm} and
G-Designer~\citep{zhang2025gdesigner} optimize multi-agent graphs.
ADAS~\citep{hu2025automated} searches executable programs using
an archive of discoveries, and AFlow~\citep{zhang2025aflow}
refines code workflows through an MCTS variant with score-based
mixed-probability selection.
JudgeFlow~\citep{ma2026judgeflow} targets problematic blocks using
execution diagnostics.
ScoreFlow~\citep{wang2025scoreflow} and
FlowReasoner~\citep{gao2025flowreasoner} train workflow generators
through preference optimization and reinforcement learning,
respectively.
Multi-parent recombination is also explored:
EvoFlow~\citep{zhang2025evoflow} retrieves query-relevant parents
for LLM-based crossover, whereas
MermaidFlow~\citep{zheng2025mermaidflow} applies
constraint-preserving crossover to workflow graphs.
Building on these approaches, DAGO focuses on
\emph{learning which parent combinations to fuse} from past
fusion outcomes, allocating a limited evaluation budget to
promising generation decisions rather than relying solely on
parent scores or query relevance.

\paragraph{Bandit-Guided Selection under Limited Evaluation Budgets.}
Bandits provide a framework for allocating evaluations through
exploration--exploitation.
OPTS~\citep{ashizawa-etal-2025-bandit} applies Thompson sampling
to prompt-design strategy selection, while
TRIPLE~\citep{shi2024efficient} formalizes prompt selection as
fixed-budget best-arm identification.
MBA-RAG~\citep{tang-etal-2025-mba} selects retrieval strategies,
and PILOT~\citep{panda-etal-2025-adaptive} uses contextual bandits
for budget-constrained model routing.
Linear contextual bandits~\citep{li2010contextual,chu2011contextual}
leverage observed features for reward prediction and
uncertainty-aware selection.
DAGO applies this principle to workflow fusion: each
\emph{arm} corresponds to a parent workflow combination, and
its reward is the generated child's validation score, not the
parents' existing scores.
Unlike query-time routing among existing models, arm selection
guides the generation of new workflows during task-level search.
As newly discovered workflows introduce additional arms, the
selector must learn from sparse feedback in an expanding
search space.
A shared linear reward model over pretrained code-and-prompt
embeddings allows feedback from evaluated arms to inform newly
proposed arms, while diagonal LinUCB balances predicted child
quality with uncertainty-driven exploration.
This addresses the growing combination space without separately
learning each arm's reward or exhaustively evaluating all
combinations.

\section{Preliminaries}
\label{sec:preliminary}

\paragraph{Workflows and validation feedback.}
A workflow $w=(\mathcal{C},\mathcal{P})$ comprises executable orchestration code and its internal prompts.
Following the code-based representation of AFlow~\citep{zhang2025aflow}, $\mathcal{C}$ implements an asynchronous \texttt{\_\_call\_\_} method that maps a task instance to an output through LLM calls, tools, and control flow.
Let $\mathrm{Eval}_{q}(w)\in[0,1]$ be its validation score averaged over $q$ executions of the fixed validation set.
A stored score $s(v)$ is an observed evaluation, not the unknown expected performance of that workflow.
Validation feedback is available during search; test data are not used for generation, arm selection, or stopping.

\paragraph{Workflow archive and fusion arms.}
The archive is a DAG $\mathcal{G}_t=(\mathcal{V}_t,\mathcal{E}_t)$ whose nodes store code, prompts, scores, and cached embeddings.
An edge records that a parent was supplied when generating a child; it does not certify that a specific component was inherited.
The DAG records generation lineage, not workflow execution.
Each \emph{arm} $a=(v_{(1)},\ldots,v_{(J)})$ specifies $J$ distinct parents, canonically ordered by decreasing score, with node identifiers breaking ties.
Let $\bar{\mathcal{A}}_t$ contain all such arms available before round $t$, and let $\mathcal{A}_t\subseteq\bar{\mathcal{A}}_t$ be the proposed candidates.
We use a fixed arm size with $N\ge J$ initial workflows.
Canonical ordering removes arbitrary permutations of the same parent set, so the full arm space contains $\binom{|\mathcal{V}_{t-1}|}{J}$ choices before round $t$.

\paragraph{Budgeted, partially observed decisions.}
Pulling arm $a_t$ generates one child $w_t$ through LLM summarization and fusion, followed by validation with reward $r_t=\mathrm{Eval}_{q_t}(w_t)$.
Conditional on the information $\mathcal{H}_t$ available before selection, define
\begin{equation}
\mu_t(a)=\mathbb{E}[r_t(a)\mid\mathcal{H}_t],
\qquad
Nq_{\mathrm{init}}+\sum_{t=1}^{K}q_t\le B_{\max},
\label{eq:budget_feedback}
\end{equation}
where $r_t(a)$ is the potential reward of generating and evaluating a child from $a$, and $K$ is the number of completed fusion rounds.
Only the selected arm's reward is observed.
We do not assume that $\mu_t$ is linear in arm features or stationary across rounds.
The objective is to discover a high-scoring workflow within the evaluation budget and return $\arg\max_{v\in\mathcal{V}_K}s(v)$; per-round reward prediction is a search surrogate, not an equivalent best-workflow guarantee.

\section{Methodology}
\label{sec:method}

\textbf{DAGO} (Directed Acyclic Graph Optimization) learns which parent workflow combinations to evaluate through a shared, LinUCB-style selection rule.
Each round separates inexpensive candidate proposal and scoring from expensive child generation and evaluation.
The DAG supplies reusable parents, the fusion operator generates a child, and bandit feedback guides which arm receives the next evaluation.
Lineage is retained for provenance; graph topology itself is not an additional input to the arm scorer in this implementation.

\paragraph{Initialization and candidate proposal.}
Task-specific strategy discovery initializes $N$ workflows, each scored with $q_{\mathrm{init}}$ validation executions and embedded once before insertion into $\mathcal{G}_0$.
Initial scores guide parent proposal but are not treated as rewards of unexecuted fusion arms.
Before round $t$, let $s_i=s(v_i)$ for $v_i\in\mathcal{V}_{t-1}$.
DAGO computes
\begin{equation}
p_t(v_i)=\frac{\exp(s_i/\tau_t)}{\sum_j\exp(s_j/\tau_t)},
\qquad
\widetilde p_t(v_i)=
\frac{\max\{p_t(v_i),p_{\min}\}}{\sum_j\max\{p_t(v_j),p_{\min}\}},
\label{eq:proposal_distribution}
\end{equation}
with $\tau_t=\tau_{\mathrm{init}}+(\tau_{\mathrm{final}}-\tau_{\mathrm{init}})t/T$.
At round $t$, set $M_t=\min\{M,\binom{|\mathcal{V}_{t-1}|}{J}\}$. DAGO repeatedly draws $J$ parents \emph{without replacement within each arm}, canonically orders them, and rejects duplicates until $M_t$ unique arms are collected.
The floor preserves positive parent probabilities after renormalization, but does not guarantee that a useful arm appears within a finite budget.
Candidate generation uses existing scores; it requires no new workflow evaluation.

\paragraph{Shared arm representation.}
A fixed encoder maps each node to $\mathbf{e}_i=\mathrm{Enc}(\mathcal{C}_i\oplus\mathcal{P}_i)\in\mathbb{R}^d$, where $\oplus$ concatenates text.
DAGO caches these embeddings and represents an arm as $\mathbf{x}(a)=[\mathbf{e}_{(1)};\ldots;\mathbf{e}_{(J)}]\in\mathbb{R}^{p}$, with $p=Jd$.
Consistent ordering distinguishes the strongest parent from auxiliary parents.
A shared linear score transfers feedback across arms instead of maintaining an independent estimate for every combination.
Each observed child updates weights subsequently used to score other arms, including newly formed arms whose parents share feature coordinates with earlier candidates.
Only one of the $M_t$ proposed arms yields a fusion-reward observation per round, motivating reuse of a fixed representation rather than training a new encoder from these sparse observations.
This is a lightweight \emph{reward surrogate}: its additive form does not explicitly model interactions between different parents, and pretrained embeddings do not make LLM fusion linearly realizable.

\paragraph{Diagonal LinUCB-style arm selection.}
Inspired by linear contextual bandits~\citep{li2010contextual,chu2011contextual}, DAGO maintains diagonal statistics $\mathbf{D}_{t-1}=\mathrm{diag}(d_{t-1,1},\ldots,d_{t-1,p})$ and $\mathbf{b}_{t-1}$, initialized as $\mathbf{D}_0=\lambda\mathbf{I}$ and $\mathbf{b}_0=\mathbf{0}$.
It computes $\widehat{\boldsymbol{\theta}}_{t-1}=\mathbf{D}_{t-1}^{-1}\mathbf{b}_{t-1}$ and selects
\begin{equation}
\begin{split}
a_t&\in\arg\max_{a\in\mathcal{A}_t} U_t(a),\\
U_t(a)&=\underbrace{\mathbf{x}(a)^\top\widehat{\boldsymbol{\theta}}_{t-1}}_{\widehat\mu_t(a)}
+\nu\underbrace{\sqrt{\sum_{k=1}^{p}x_k(a)^2/d_{t-1,k}}}_{h_t(a)}.
\end{split}
\label{eq:ucb}
\end{equation}
The first term ranks arms using past fusion rewards; the second favors features with less accumulated coordinate-wise coverage.
Here $h_t$ is an exploration proxy, not a calibrated confidence interval for nonlinear fusion rewards.
Moreover, replacing the full Gram matrix by its diagonal changes both the fitted coefficients and the exploration term; the update is not ordinary multivariate ridge regression.
An arm with individually weaker parents can outrank another candidate when its learned score or exploration bonus is larger.
However, the selector cannot recover a promising arm omitted by proposal.
Scoring $M_t$ arms costs $O(M_t p)$ and storing the bandit state costs $O(p)$.

\paragraph{Fusion, evaluation, and feedback update.}
For each selected parent, a summarization LLM produces $z_j=\mathrm{Summarize}(\mathcal{C}_j,\mathcal{P}_j,s_j)$ describing strengths, weaknesses, and interface constraints.
The fusion LLM receives \emph{both} the parents and their summaries:
\begin{equation}
w_t=\mathrm{Fuse}\!\left(\{(\mathcal{C}_j,\mathcal{P}_j,s_j,z_j)\}_{v_j\in a_t}\right),
\qquad r_t=\mathrm{Eval}_{q_t}(w_t).
\label{eq:fusion_reward}
\end{equation}
The prompt starts from the strongest parent and permits a small number of justified, targeted modifications informed by the selected workflows.
The reward is the child's absolute validation score, \emph{not} improvement over a parent.
With $\mathbf{x}_t=\mathbf{x}(a_t)$, DAGO updates
\begin{equation}
 d_{t,k}=d_{t-1,k}+x_{t,k}^2,
 \qquad b_{t,k}=b_{t-1,k}+r_t x_{t,k},
 \quad k=1,\ldots,p.
\label{eq:diagonal_update}
\end{equation}
The evaluated child is embedded and inserted with incoming edges from its selected parents.
These sufficient statistics implement the feedback update; no periodic refitting or forgetting step is required by this procedure.

\paragraph{Budget and stopping.}
Algorithm~\ref{alg:dago} returns the highest-scoring stored workflow.
The reported configuration evaluates $N=10$ roots once and each child three times, allowing at most $T=30$ fusion rounds within $B_{\max}=100$ validation-set evaluations.
After at least $T_{\min}=20$ fusion rounds, early stopping checks for a lack of relative improvement in the most recent $C_{\mathrm{patience}}=5$ round scores; this recent-score criterion is separate from retaining the highest-scoring workflow.
Repeated executions are averaged into \emph{one} reward per selected arm, not three independent fusion decisions.
The budget therefore measures validation executions, not proposed arms or generated children alone; LLM generation and summarization costs are accounted for separately.
Appendix~\ref{app:algorithm} specifies sampling, stopping, and implementation details.

\begin{algorithm}[t]
\caption{DAGO: feedback-guided fusion under an evaluation budget}
\label{alg:dago}
\footnotesize
\begin{algorithmic}[1]
\REQUIRE $N\ge J$, $M\ge1$; $T,B_{\max}$; repeats $q_{\mathrm{init}},q$; $\lambda,\nu$; stopping parameters
\STATE Generate, evaluate, and embed $N$ roots; initialize $\mathcal{G}_0$
\STATE $\mathbf{D}\leftarrow\lambda\mathbf{I}_{Jd}$, $\mathbf{b}\leftarrow\mathbf{0}$; $B\leftarrow Nq_{\mathrm{init}}$
\STATE Set incumbent $v^*\leftarrow\arg\max_{v\in\mathcal{V}_0}s(v)$; initialize empty reward history
\FOR{$t=1$ to $T$}
    \STATE \textbf{if} $B+q>B_{\max}$ \textbf{then break}
    \STATE Set $M_t\leftarrow\min\{M,\binom{|\mathcal{V}_{t-1}|}{J}\}$; sample $M_t$ unique arms using Eq.~\eqref{eq:proposal_distribution}
    \STATE Select $a_t$ using Eq.~\eqref{eq:ucb}; set $\mathbf{x}_t\leftarrow\mathbf{x}(a_t)$
    \STATE Summarize selected parents; fuse them into child $w_t$
    \STATE $r_t\leftarrow\mathrm{Eval}_{q}(w_t)$; $B\leftarrow B+q$
    \STATE Update $(\mathbf{D},\mathbf{b})$ by Eq.~\eqref{eq:diagonal_update}
    \STATE Embed and insert child with parent edges; update $v^*$ and record $r_t$
    \STATE \textbf{if} $t\ge T_{\min}$ and recent scores satisfy Eq.~\eqref{eq:early_stopping_gate} \textbf{then break}
\ENDFOR
\STATE \textbf{return} $v^*$
\end{algorithmic}
\end{algorithm}

\paragraph{Decision analysis without linear realizability.}
We analyze the selector's actual scores rather than transferring a full-matrix linear-bandit guarantee to the LLM pipeline.
Let $a_t^\star\in\arg\max_{a\in\mathcal{A}_t}\mu_t(a)$ and define the residual span
$\varepsilon_t=\max_{a\in\mathcal{A}_t}(\mu_t(a)-\widehat\mu_t(a))-\min_{a\in\mathcal{A}_t}(\mu_t(a)-\widehat\mu_t(a))$.
For arbitrary, possibly nonlinear $\mu_t$, the maximizing rule implies
\begin{equation}
\mu_t(a_t^\star)-\mu_t(a_t)
\le \varepsilon_t+\nu\bigl(h_t(a_t)-h_t(a_t^\star)\bigr)
\le \varepsilon_t+\nu h_t(a_t).
\label{eq:decision_error}
\end{equation}
Appendix~\ref{app:theory} proves this inequality, separates candidate-proposal loss, and bounds accumulated diagonal exploration bonuses.
The residual span is unchanged by a common offset in all predictions, emphasizing relative discrimination rather than perfect reward calibration.
The result identifies prediction differences and proposal coverage as conditions for effective selection; it does not assume they improve or establish sublinear regret for DAGO.
The practical value of feedback and exploration is assessed by the random-selection and exploration-free ablations, rather than inferred from linear realizability.

\section{Experiments}
\label{sec:experiments}

\subsection{Experimental Setup}
\paragraph{Benchmarks.}
We evaluate DAGO on six benchmarks: GSM8K~\citep{cobbe2021training} and MATH~\citep{hendrycksmath2021} for mathematical reasoning, HumanEval~\citep{chen2021codex} and MBPP~\citep{austin2021program} for code generation, and HotpotQA~\citep{yang-etal-2018-hotpotqa} and DROP~\citep{dua-etal-2019-drop} for question answering.
We follow AFlow's dataset splits and evaluation protocols~\citep{zhang2025aflow}, restricting MATH to level-5 problems.
Validation data are used for search and workflow selection; final results are reported on the disjoint test splits.

\paragraph{Baselines.}
We compare with prompting and answer-refinement methods: IO, CoT~\citep{wei2022chain}, CoT-SC~\citep{wang2023selfconsistency}, MedPrompt~\citep{nori2023can}, MultiPersona~\citep{wang-etal-2024-unleashing}, Self-Refine~\citep{madaan2023selfrefine}, and GoT~\citep{besta2024graph}.
We also include answer ensembling with Blender~\citep{jiang2023llmblender}, dynamic agent collaboration with DyLAN~\citep{liu2024dylan}, and automated workflow optimization with GPTSwarm~\citep{zhuge2024gptswarm}, ADAS~\citep{hu2025automated}, MaAS~\citep{zhang2025multiagent}, and AFlow~\citep{zhang2025aflow}.
AFlow is our main reference for single-parent workflow search.
Blender aggregates candidate answers at inference time, whereas DAGO fuses executable workflows during search.
% AUTHOR CHECK: Confirm that "Blender" is the LLM-Blender implementation cited here.
% Document the source, executor(s), and any adaptations of each newly added baseline.

\paragraph{Metrics.}
We report solve rate (\%) on GSM8K and MATH, pass@1 (\%) on HumanEval and MBPP~\citep{chen2021codex}, and F1 (\%) on HotpotQA and DROP.
The overall score is the unweighted mean across benchmarks.
Unless otherwise stated, each validation-selected workflow is fixed and executed three times on the same held-out test split; we report the mean test score. These repetitions are test executions, not independent optimization runs.

\paragraph{Implementation Details.}
DAGO follows AFlow's fixed model configuration: Claude~3.5~Sonnet~\citep{claude35} for workflow generation, summarization, and fusion, and GPT-4o-mini-0718~\citep{gpt4o} for execution.
Both methods use identical validation/test splits and a budget of up to 100 validation-set evaluations per run ($20\times5$ for AFlow; $10+30\times3$ for DAGO).
Workflow selection uses validation scores only; repeated test evaluations are performed after selection and are not used to revise the workflow.
Qwen3-Embedding~\citep{zhang2025qwen3} produces $2560$-dimensional workflow embeddings.
We set $N{=}10$, $T{=}30$, $M{=}200$, $J{=}5$, and $\nu{=}0.1$, annealing the sampling temperature from $2.0$ to $0.5$.
Full hyperparameters, early-stopping rules, and dataset details appear in Appendices~\ref{app:hyperparameters} and~\ref{app:datasets}.

\subsection{Overall Performance and Analysis}

\begin{table*}[t]
\captionsetup{font=footnotesize}
\caption{\textbf{Main results on six benchmarks.}
Means over three test executions (\%), grouped by task category. Avg. is the six-task macro average; best scores are \textbf{bold}.}
\label{tab:main}
\centering
\small
\setlength{\tabcolsep}{5.2pt}
\renewcommand{\arraystretch}{0.90}
\begin{tabular}{@{}lcc@{\hspace{6pt}}cc@{\hspace{6pt}}cc|c@{}}
\toprule
& \multicolumn{2}{c}{\textbf{QA}} & \multicolumn{2}{c}{\textbf{Code}} & \multicolumn{2}{c}{\textbf{Math}} & \\
\cmidrule(lr){2-3}\cmidrule(lr){4-5}\cmidrule(lr){6-7}
\textbf{Method} & HotpotQA & DROP & HumanEval & MBPP & GSM8K & MATH & Avg. \\
\midrule
IO           & 68.1 & 68.3 & 87.0 & 71.8 & 92.7 & 48.6 & 72.8 \\
CoT~\cite{wei2022chain}          & 67.9 & 78.5 & 88.6 & 71.8 & 92.4 & 48.8 & 74.7 \\
CoT-SC~\cite{wang2023selfconsistency}       & 68.9 & 78.8 & 91.6 & 73.6 & 92.7 & 50.4 & 76.0 \\
MedPrompt~\cite{nori2023can}    & 68.3 & 78.0 & 91.6 & 73.6 & 90.0 & 50.0 & 75.3 \\
MultiPersona~\cite{wang-etal-2024-unleashing} & 69.2 & 74.4 & 89.3 & 73.6 & 92.8 & 50.8 & 75.1 \\
Self-Refine~\cite{madaan2023selfrefine}  & 60.8 & 70.2 & 87.8 & 69.8 & 89.6 & 46.1 & 70.7 \\
GoT~\cite{besta2024graph} & 72.5 & 71.7 & 86.3 & 68.6 & 86.7 & 39.5 & 70.9 \\
Blender~\cite{jiang2023llmblender} & 72.3 & 80.4 & 88.7 & 77.1 & 88.4 & 46.9 & 75.6 \\
DyLAN~\cite{liu2024dylan} & 74.1 & 82.2 & 90.4 & 77.3 & 90.0 & 48.5 & 77.1 \\
GPTSwarm~\cite{zhuge2024gptswarm} & 73.2 & 81.0 & 89.3 & 77.4 & 89.1 & 47.9 & 76.3 \\
ADAS~\cite{hu2025automated}         & 64.5 & 76.6 & 82.4 & 53.4 & 90.8 & 35.4 & 67.2 \\
MaAS~\cite{zhang2025multiagent} & 75.3 & \textbf{83.1} & 91.6 & 82.2 & 91.5 & 52.2 & 79.3 \\
AFlow~\cite{zhang2025aflow}        & 73.5 & 80.6 & \textbf{94.7} & 83.4 & 93.5 & 56.2 & 80.3 \\
\midrule
DAGO         & \textbf{75.9} & 82.7 & \textbf{94.7} & \textbf{85.0} & \textbf{93.7} & \textbf{58.1} & \textbf{81.7} \\
\bottomrule
\end{tabular}
\end{table*}

\definecolor{bestgray}{gray}{0.92} % light gray for highlighting best scores

\paragraph{Overall Performance.}
As shown in \cref{tab:main,fig:main_results}, DAGO achieves the highest macro-average score of $81.7$, compared with $80.3$ for AFlow and $79.3$ for MaAS.
It ranks first on HotpotQA, MBPP, GSM8K, and MATH, and ties AFlow on HumanEval.
The gains over AFlow are largest on HotpotQA ($+2.4$ points) and DROP ($+2.1$), followed by MATH ($+1.9$) and MBPP ($+1.6$); the gain on GSM8K is smaller ($+0.2$).
MaAS remains strongest on DROP, scoring $83.1$ versus DAGO's $82.7$.
Compared with MaAS, DAGO's largest gains are on MATH ($+5.9$), HumanEval ($+3.1$), and MBPP ($+2.8$).
Thus, DAGO's average advantage comes from improvements across task categories, rather than a gain on a single benchmark.

\begin{wrapfigure}{l}{0.38\textwidth}
\centering
\includegraphics[width=\linewidth]{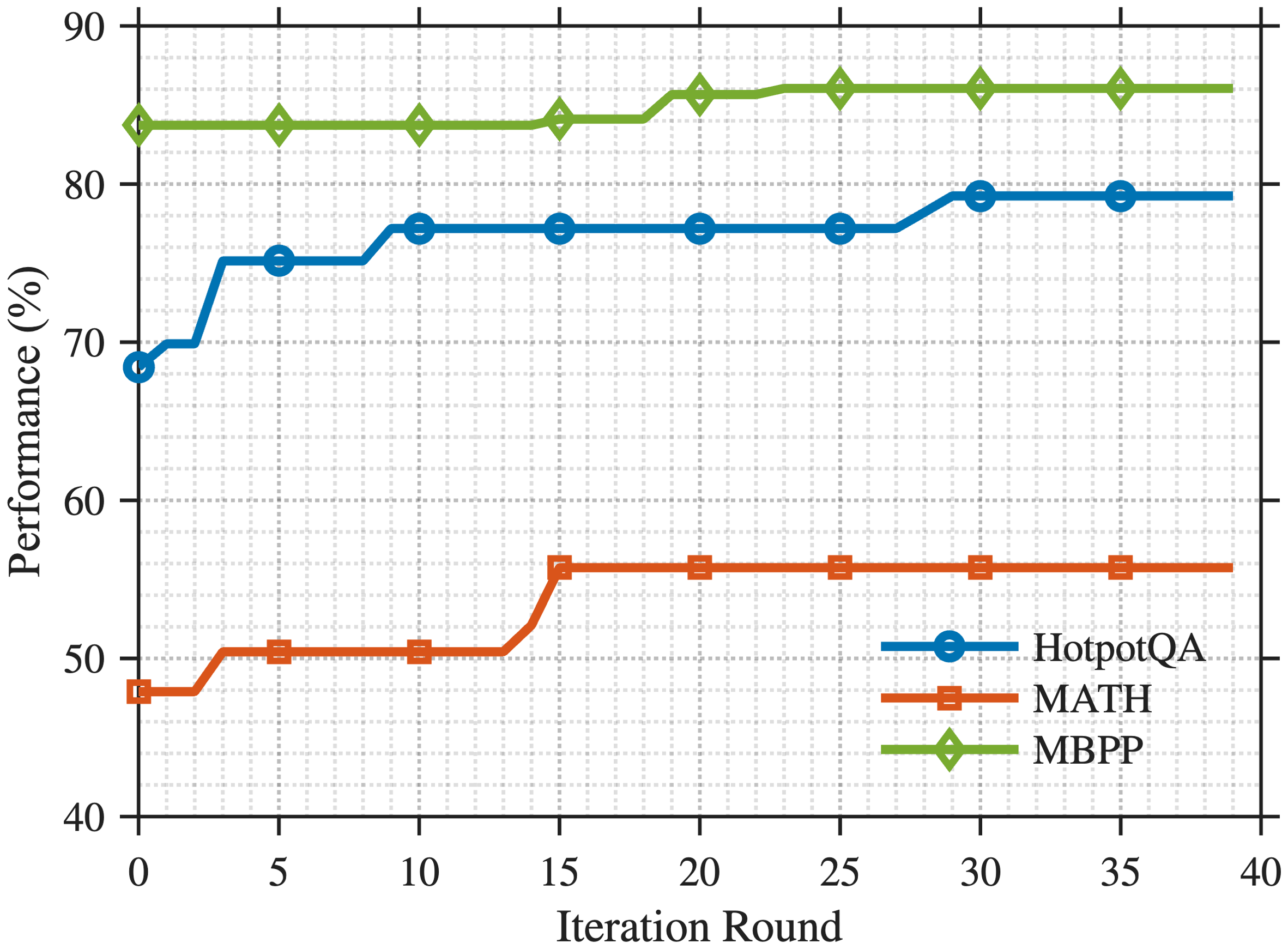}
\captionsetup{font=footnotesize,justification=raggedright,singlelinecheck=false}
\caption{\textbf{Validation best-so-far trajectory during optimization.}
Running best validation scores on HotpotQA, MATH, and MBPP.}
\label{fig:learning}
% AUTHOR CHECK: The supplied plot extends to about 40 rounds, whereas T=30
% counts fusion iterations. Clarify whether the axis includes initialization.
\end{wrapfigure}

\paragraph{Optimization Trajectory.}
\cref{fig:learning} reports the \emph{validation} best-so-far performance during optimization.
Across all three tasks, DAGO exhibits a monotonic improvement trend in the running best solution and typically plateaus by the end of the budget.
This pattern indicates that bandit-guided multi-parent fusion steadily discovers and retains higher-quality workflow structures, rather than relying on occasional gains from unguided perturbations.

\paragraph{Zero-Shot Cross-Task Transfer.}

\begin{figure}[t]
\centering
\captionsetup{font=footnotesize}
\begin{minipage}[t]{0.48\textwidth}
\vspace{0pt}
\centering
\includegraphics[width=\linewidth]{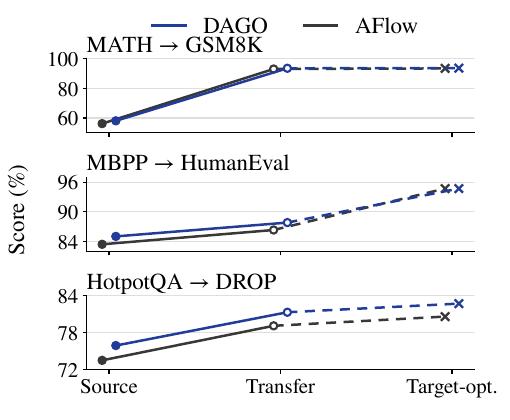}
\caption{\textbf{Cross-task transfer trajectories.} Source / Transfer: source-optimized workflow evaluated on source / target tasks. Target-opt.: target-specific search with the same budget. Transfer uses no re-optimization.}
\label{fig:transfer_vis}
\end{minipage}\hfill
\begin{minipage}[t]{0.49\textwidth}
\vspace{0pt}
\captionof{table}{\textbf{Cross-task transfer.} Means over three test executions: target-optimized $\mathrm{Opt}_t$, zero-shot $\mathrm{Trans}_{s\to t}$, regret $R=\mathrm{Opt}_t-\mathrm{Trans}_{s\to t}$, and normalized regret $\widetilde{R}=R/\mathrm{Opt}_t$. Avg. covers three pairs.}
\label{tab:transfer}
\centering
\small
\setlength{\tabcolsep}{2pt}
\renewcommand{\arraystretch}{0.9}
\begin{tabular}{@{}lcccc@{}}
\toprule
Method & $\mathrm{Opt}_t\uparrow$ & $\mathrm{Trans}_{s\to t}\uparrow$ & $R_{s\to t}\downarrow$ & $\widetilde{R}_{s\to t}\downarrow$ \\
\midrule
\multicolumn{5}{l}{\textit{MATH $\to$ GSM8K}} \\
AFlow & 93.5 & 93.1 & 0.4 & 0.0043 \\
DAGO & \textbf{93.7} & \textbf{93.6} & \textbf{0.1} & \textbf{0.0011} \\
\midrule
\multicolumn{5}{l}{\textit{MBPP $\to$ HumanEval}} \\
AFlow & \textbf{94.7} & 86.3 & 8.4 & 0.0887 \\
DAGO & \textbf{94.7} & \textbf{87.8} & \textbf{6.9} & \textbf{0.0729} \\
\midrule
\multicolumn{5}{l}{\textit{HotpotQA $\to$ DROP}} \\
AFlow & 80.6 & 79.1 & 1.5 & 0.0186 \\
DAGO & \textbf{82.7} & \textbf{81.3} & \textbf{1.4} & \textbf{0.0169} \\
\midrule
Avg.\ (AFlow) & --- & --- & 3.43 & 0.0372 \\
Avg.\ (DAGO) & --- & --- & \textbf{2.80} & \textbf{0.0303} \\
\bottomrule
\end{tabular}
\end{minipage}
\end{figure}

\cref{fig:transfer_vis,tab:transfer} evaluate whether the best workflows discovered on a source task generalize to a related target task \emph{without} re-optimization.
Across all three transfer pairs, DAGO yields consistently lower transfer regret (both absolute and normalized), suggesting that multi-parent fusion tends to distill reusable workflow components that are less tied to source-task idiosyncrasies.
\WFclear
\begin{figure}[t]
\centering
\captionsetup{font=footnotesize,justification=raggedright,singlelinecheck=false}
\begin{minipage}[t]{0.25\textwidth}
\vspace{0pt}
\captionof{table}{\textbf{Search cost (USD).} Total covers six benchmarks; lower costs are bold.}
\label{tab:search_cost}
\end{minipage}\hfill
\begin{minipage}[t]{0.40\textwidth}
\vspace{0pt}
\captionof{table}{\textbf{Inference performance and cost.} Means over three test executions. Overall / Cost: three-task average / total USD. Hotpot: HotpotQA. Better results per executor are bold.}
\label{tab:cost}
\end{minipage}\hfill
\begin{minipage}[t]{0.31\textwidth}
\vspace{0pt}
\captionof{figure}{\textbf{Inference cost--performance.} Table~\ref{tab:cost} with log-scale cost; dashed line: Pareto frontier.}
\label{fig:pareto}
\end{minipage}
\par
\begin{minipage}[t]{0.25\textwidth}
\vspace{0pt}
\centering
\footnotesize
\setlength{\tabcolsep}{1.5pt}
\renewcommand{\arraystretch}{1.02}
\begin{tabular}{@{}lrr@{}}
\toprule
Task & AFlow & DAGO \\
\midrule
HotpotQA & \textbf{13.39} & 18.89 \\
DROP & \textbf{4.85} & 5.64 \\
HumanEval & 2.63 & \textbf{1.92} \\
MBPP & 9.30 & \textbf{6.93} \\
GSM8K & 20.52 & \textbf{18.18} \\
MATH & 35.76 & \textbf{25.22} \\
\midrule
Total & 86.45 & \textbf{76.78} \\
\bottomrule
\end{tabular}
\end{minipage}\hfill
\begin{minipage}[t]{0.40\textwidth}
\vspace{0pt}
\centering
\fontsize{7.5}{9}\selectfont
\setlength{\tabcolsep}{1.5pt}
\renewcommand{\arraystretch}{0.86}
\begin{tabular}{@{}lccccc@{}}
\toprule
Method & MATH & MBPP & Hotpot & Overall & Cost \\
\midrule
\multicolumn{6}{l}{\textit{GPT-4o-mini}~\cite{gpt4o}} \\
AFlow & 56.2 & 83.4 & 73.5 & 71.0 & 2.7278 \\
DAGO & \textbf{58.1} & \textbf{85.0} & \textbf{75.9} & \textbf{73.0} & \textbf{2.4344} \\
\midrule
\multicolumn{6}{l}{\textit{DeepSeek-V3.2}~\cite{liu2025deepseek}} \\
AFlow & 85.8 & 87.1 & 78.1 & 83.7 & 4.0197 \\
DAGO & \textbf{86.3} & \textbf{87.7} & \textbf{80.5} & \textbf{84.8} & \textbf{3.7368} \\
\midrule
\multicolumn{6}{l}{\textit{Gemini-3-Flash}~\cite{ge3}} \\
AFlow & \textbf{87.0} & 92.1 & 79.4 & 86.2 & 14.5263 \\
DAGO & 86.8 & \textbf{93.0} & \textbf{81.6} & \textbf{87.1} & \textbf{10.5712} \\
\bottomrule
\end{tabular}
\end{minipage}\hfill
\begin{minipage}[t]{0.31\textwidth}
\vspace{0pt}
\centering
\includegraphics[width=\linewidth]{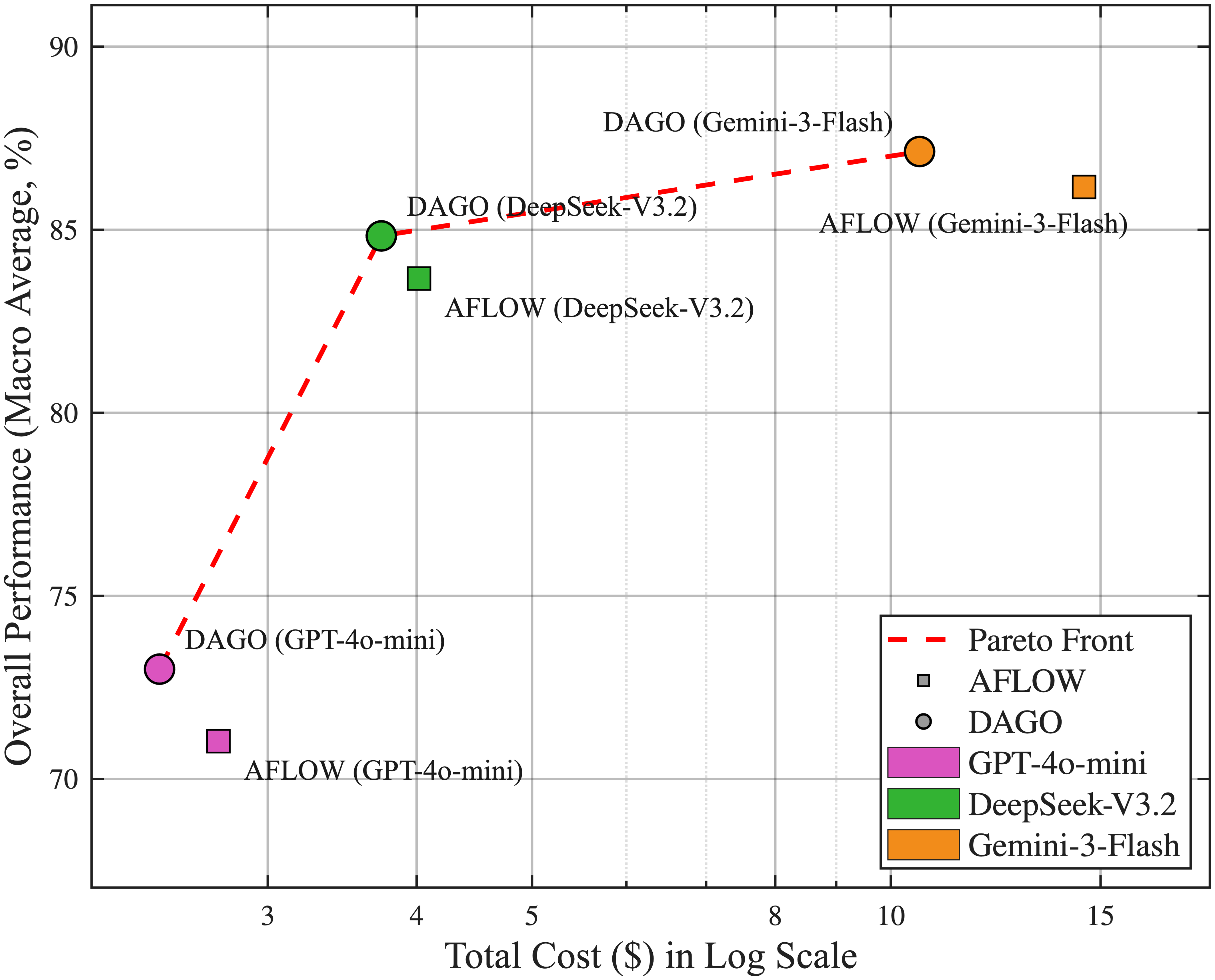}
\end{minipage}
\end{figure}

\paragraph{Search Cost and Inference Cost--Performance Trade-offs.}
\cref{tab:search_cost} separates search expenditure from inference cost. DAGO reduces total search cost from \$86.45 (AFlow) to \$76.78, an $11.2\%$ saving overall. It is cheaper on four benchmarks, with the largest saving on MATH (\$10.54), but more expensive on HotpotQA and DROP, so the reduction is aggregate rather than uniform. For inference, \cref{tab:cost,fig:pareto} show that DAGO achieves a higher three-task average on MATH, MBPP, and HotpotQA at lower cost for all execution models: costs decrease by $10.8\%$, $7.0\%$, and $27.2\%$ with GPT-4o-mini, DeepSeek-V3.2, and Gemini-3-Flash, while average scores increase from $71.0$ to $73.0$, $83.7$ to $84.8$, and $86.2$ to $87.1$, respectively. These inference comparisons exclude search expenditure and cover only the three reported test sets.

%==============================================================================

\subsection{Ablation Study}
\label{sec:ablation}

\paragraph{Effect of Bandit-Guided Arm Selection.}

\cref{tab:ab_selection} compares LinUCB with random selection and different exploration strengths.
Random selection lowers the score by $2.9$ points on GSM8K, $3.1$ on HumanEval, and $1.9$ on DROP relative to the default $\nu{=}0.1$.
Both greedy selection ($\nu{=}0$) and stronger exploration ($\nu{=}1$) also yield lower means on all three tasks.
Among the tested settings, moderate exploration gives the best results; neither random nor greedy arm selection matches it.

\paragraph{Effect of Individual Components.}
\cref{tab:ab_components} evaluates four components on GSM8K, HotpotQA, and MBPP. Removing parent summaries or replacing UCB with random selection reduces the three-task average from $84.9$ to $82.9$ and $83.0$, respectively, while single-parent fusion yields $83.3$ and incurs the largest drop on HotpotQA ($2.1$ points), supporting the use of parent summaries, multi-parent fusion, and bandit-guided selection. Full-matrix LinUCB achieves $84.2$, below the diagonal variant's $84.9$; with high-dimensional arm features and at most $30$ fusion rewards per search, estimating cross-feature covariances may be less reliable, while the diagonal approximation also avoids maintaining a full $Jd\times Jd$ covariance matrix.

\begin{figure}[!htbp]
\centering
\captionsetup{font=footnotesize,justification=raggedright,singlelinecheck=false}
\begin{minipage}[t]{0.42\textwidth}
\vspace{0pt}
\captionof{figure}{\textbf{Effect of arm size $J$.} Means over three test executions with varying numbers of parents.}
\label{fig:ab_nodes}
\end{minipage}\hfill
\begin{minipage}[t]{0.54\textwidth}
\vspace{0pt}
\captionof{table}{\textbf{Robustness across execution models.} Means over three test executions for the selected workflows.}
\label{tab:ab_models}
\end{minipage}
\par
\begin{minipage}[t]{0.42\textwidth}
\vspace{0pt}
\centering
\includegraphics[width=0.75\linewidth]{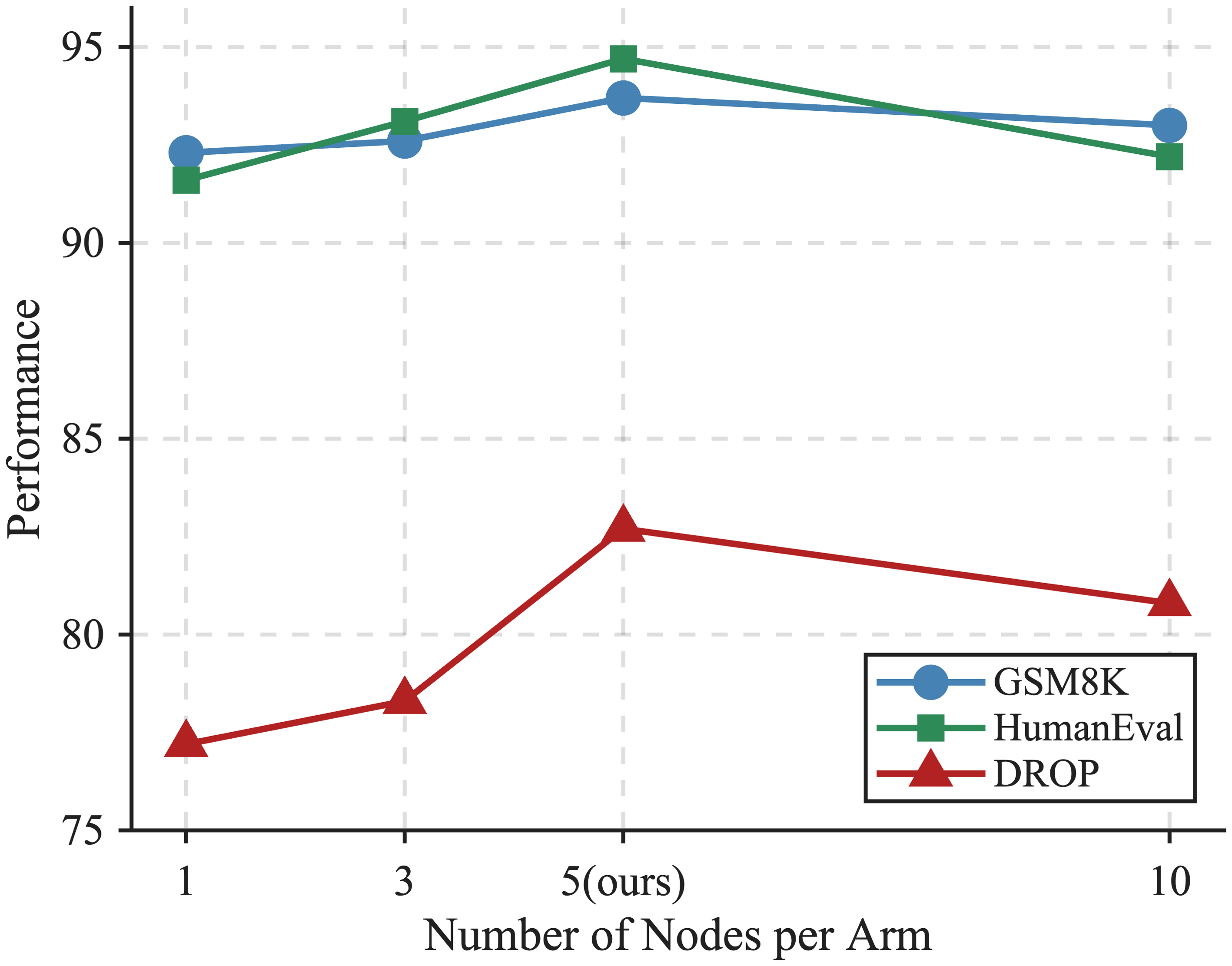}
\end{minipage}\hfill
\begin{minipage}[t]{0.54\textwidth}
\vspace{0pt}
\centering
\small
\setlength{\tabcolsep}{2pt}
\renewcommand{\arraystretch}{1.05}
\begin{tabular*}{\linewidth}{@{\extracolsep{\fill}}llccc@{}}
\toprule
Model & Method & MATH & MBPP & HotpotQA \\
\midrule
\multirow{2}{*}{GPT-4o-mini} & AFlow & 56.2 & 83.4 & 73.5 \\
& DAGO & \textbf{58.1} & \textbf{85.0} & \textbf{75.9} \\
\midrule
\multirow{2}{*}{DeepSeek-V3.2} & AFlow & 85.8 & 87.1 & 78.1 \\
& DAGO & \textbf{86.3} & \textbf{87.7} & \textbf{80.5} \\
\midrule
\multirow{2}{*}{Gemini-3-Flash} & AFlow & \textbf{87.0} & 92.1 & 79.4 \\
& DAGO & 86.8 & \textbf{93.0} & \textbf{81.6} \\
\bottomrule
\end{tabular*}
\end{minipage}
\end{figure}

\begin{table}[t]
\centering
\begin{minipage}{\textwidth}
\captionsetup{font=footnotesize,justification=raggedright,singlelinecheck=false}
\begin{minipage}[t]{0.44\textwidth}
\vspace{0pt}
\caption{\textbf{Arm selection.} Means over three test executions; $\nu$ controls UCB exploration.}
\label{tab:ab_selection}
\end{minipage}\hfill
\begin{minipage}[t]{0.54\textwidth}
\vspace{0pt}
\caption{\textbf{Component ablations.} Means over three test executions; Avg. is the three-task macro average. Best scores are bold.}
\label{tab:ab_components}
\end{minipage}
\par
\begin{minipage}[t]{0.44\textwidth}
\vspace{0pt}
\centering
\fontsize{7.5}{9}\selectfont
\setlength{\tabcolsep}{2pt}
\renewcommand{\arraystretch}{1.2}
\begin{tabular*}{\linewidth}{@{\extracolsep{\fill}}lccc@{}}
\toprule
Method & GSM8K & HumanEval & DROP \\
\midrule
AFlow & 93.5 & \textbf{94.7} & 80.6 \\
DAGO ($\nu{=}0.1$) & \textbf{93.7} & \textbf{94.7} & \textbf{82.7} \\
\midrule
DAGO (Random) & 90.8 & 91.6 & 80.8 \\
DAGO ($\nu{=}0$) & 92.0 & 93.9 & 81.4 \\
DAGO ($\nu{=}1$) & 91.7 & 92.4 & 82.4 \\
\bottomrule
\end{tabular*}
\end{minipage}\hfill
\begin{minipage}[t]{0.54\textwidth}
\vspace{0pt}
\centering
\fontsize{7.5}{9}\selectfont
\setlength{\tabcolsep}{3pt}
\renewcommand{\arraystretch}{1.03}
\begin{tabular*}{\linewidth}{@{\extracolsep{\fill}}lcccc@{}}
\toprule
Setting & GSM8K & HotpotQA & MBPP & Avg. \\
\midrule
AFlow & 93.5 & 73.5 & 83.4 & 83.5 \\
DAGO (full) & \textbf{93.7} & \textbf{75.9} & \textbf{85.0} & \textbf{84.9} \\
\midrule
w/o summarization & 92.0 & 74.2 & 82.6 & 82.9 \\
w/o UCB (Random) & 90.8 & 74.5 & 83.7 & 83.0 \\
Full-matrix LinUCB & 93.2 & 75.1 & 84.2 & 84.2 \\
Single-parent only & 92.7 & 73.8 & 83.5 & 83.3 \\
\bottomrule
\end{tabular*}
\end{minipage}
\end{minipage}
\end{table}

\paragraph{Effect of Arm Size.}
\cref{fig:ab_nodes} studies the impact of arm size, i.e., the number of parent workflows contributing to each fusion step.
When $J{=}1$, DAGO degenerates to single-parent evolution and performs worst, suggesting limited structural diversity.
Performance improves as $J$ increases and peaks around $J{=}5$, while a larger arm ($J{=}10$) leads to degradation, consistent with a trade-off:
increasing $J$ can enrich complementary signals, but overly many parents may introduce incompatible patterns and dilute the fusion objective.

\paragraph{Robustness to Execution Model Choice.}
\cref{tab:ab_models} indicates that DAGO generally maintains its advantage across execution models, improving on two out of three tasks even when the executor is already strong.
This supports the interpretation that DAGO's gains primarily stem from discovering better workflow structures (e.g., decomposition, verification, and coordination patterns), rather than relying on idiosyncrasies of a specific base model.

\section{Conclusion}
\label{sec:conclusion}

We presented DAGO, a framework for LLM-based workflow optimization that learns which parent workflows to fuse from the validation feedback of their offspring.
By treating parent combinations as bandit arms, DAGO uses a shared reward predictor and a diagonal UCB-style policy to guide selection, while a DAG archive and summary-guided fusion support the reuse and refinement of discovered workflows.
Across six benchmarks spanning mathematical reasoning, code generation, and question answering, DAGO achieves the highest macro-average score among the evaluated methods.
Under the same validation-evaluation budget cap as AFlow, it improves the macro-average score from $80.3$ to $81.7$ while reducing aggregate search expenditure by $11.2\%$.
Ablations further support the value of feedback-guided selection and exploration, alongside the benefits of multi-parent fusion.
Together, these results highlight feedback-guided parent selection as a key component of effective workflow fusion: prior search outcomes inform not only which workflows are available for reuse, but also which combinations to explore next.

\section*{AI Use Statement}

In this work, we used generative AI tools to improve the clarity, grammar, and presentation of the manuscript. We also used these tools to assist with implementing, debugging, and refining parts of the research code. All AI-assisted text was reviewed and revised by the authors, and all AI-assisted code was inspected and tested by the authors. The authors take full responsibility for the final manuscript, implementation, and experimental results.

\section*{Ethics Statement}

This work presents a methodological study of automated agentic workflow optimization using existing language models and standard benchmarks for mathematical reasoning, code generation, and question answering. The study does not involve human participants, private or sensitive personal data, or deployment in high-stakes settings. Our experiments evaluate workflows within established benchmark protocols. We are not aware of any specific ethical concerns arising from the experiments conducted in this study.

\section*{Reproducibility Statement}

The main text and appendices document the DAGO optimizer, prompt templates, task operators, and evaluation protocol, together with experimental configurations and concrete workflow examples.\newline Section~\ref{sec:method} and Algorithm~\ref{alg:dago} describe the DAGO optimization procedure, including candidate-arm proposal, bandit-guided parent selection, and workflow fusion. Appendix~\ref{app:theory} provides the theoretical analysis and its assumptions. Section~\ref{sec:experiments} specifies the models, benchmarks, baselines, evaluation budgets, and reporting protocol. Appendices~\ref{app:hyperparameters} and~\ref{app:datasets} detail the hyperparameters, stopping criteria, dataset splits, and evaluation metrics. Appendix~\ref{app:prompts} presents the LLM prompt templates, and Appendix~\ref{app:algorithm} provides extended pseudocode for sampling, diagonal LinUCB updates, embedding construction, and the full optimization loop. Appendices~\ref{app:workflows} and~\ref{app:results} document discovered workflows, their code and prompts, and detailed experimental results. These descriptions specify the search and evaluation settings used in our experiments and provide concrete workflow examples for reimplementation.

\bibliography{main}
\bibliographystyle{iclr2027/iclr2027_conference}

%%%%%%%%%%%%%%%%%%%%%%%%%%%%%%%%%%%%%%%%%%%%%%%%%%%%%%%%%%%%%%%%%%%%%%%%%%%%%%%
% APPENDIX
%%%%%%%%%%%%%%%%%%%%%%%%%%%%%%%%%%%%%%%%%%%%%%%%%%%%%%%%%%%%%%%%%%%%%%%%%%%%%%%
\newpage
\appendix
\onecolumn
\raggedbottom
\setlength{\parskip}{4pt}

%------------------------------------------------------------------------------
% Appendix Table of Contents
%------------------------------------------------------------------------------
\vfill
\begin{center}
{\LARGE\bfseries Appendix}
\end{center}

\makeatletter
\newcommand{\appendixtocentry}[3]{%
    \noindent\normalsize\hyperref[#1]{\textbf{#2}\hspace{0.5em}#3}\dotfill\pageref{#1}\par\vspace{1.5em}%
}
\makeatother

\appendixtocentry{app:theory}{Appendix A}{Theoretical Perspective on Bandit-Guided Multi-Parent Fusion}

\appendixtocentry{app:hyperparameters}{Appendix B}{Hyperparameter Settings}

\appendixtocentry{app:datasets}{Appendix C}{Dataset Details}

\appendixtocentry{app:prompts}{Appendix D}{LLM Prompt Templates}

\appendixtocentry{app:algorithm}{Appendix E}{Extended Algorithm Details}

\appendixtocentry{app:workflows}{Appendix F}{Comprehensive Analysis of Discovered Workflows}

\appendixtocentry{app:results}{Appendix G}{Full Experimental Results}

\vfill
\newpage

%==============================================================================
\section{Decision Analysis without Linear Realizability}
\label{app:theory}

This appendix characterizes the implemented selection rule without assuming that LLM-driven fusion rewards are linear in pretrained embeddings.
Its results separate what follows algebraically from the rule, what requires accurate predictions, and what remains an empirical question.
They are not a new general-purpose bandit algorithm or a regret guarantee for the complete workflow-generation process.

\subsection{History, potential rewards, and comparison scope}
\label{app:analysis_scope}

Immediately before round $t$, the archive is $\mathcal{G}_{t-1}$.
Let $\bar{\mathcal{A}}_t$ contain all canonically ordered arms of $J$ distinct parents from that archive.
The proposal mechanism repeatedly samples arms and rejects duplicates until the candidate set $\mathcal{A}_t$ contains $M_t=\min\{M,|\bar{\mathcal{A}}_t|\}$ distinct arms, where $M$ is the candidate cap.
Let $\mathcal{H}_t$ include previous decisions and rewards, the archive, cached features, fixed model configurations, the current bandit state, and the realized candidates, before new fusion randomness is drawn.
For every feasible arm $a$, define the potential reward
\begin{equation}
 r_t(a)=\mathrm{Eval}_{q_t}\bigl(\mathrm{Fuse}(a;\xi_t^{\mathrm{gen}});\xi_t^{\mathrm{eval}}\bigr),
 \qquad \mu_t(a)=\mathbb{E}[r_t(a)\mid\mathcal{H}_t]\in[0,1].
 \label{eq:potential_rewards}
\end{equation}
Here fusion includes summarization and the full parent inputs; the random variables include any remaining generation and execution variability.
The observed reward is $r_t=r_t(a_t)$.
Averaging several executions of the \emph{same generated child} reduces evaluation variability only to the extent permitted by their dependence; it does not average over independently generated children.
No independence between executions, sub-Gaussian parameter, stationary reward law, or realizable linear function is needed for the pathwise inequalities below.

Write $m_t(a)=\widehat\mu_t(a)$, $h_t(a)\ge0$, and $U_t(a)=m_t(a)+\nu h_t(a)$ for the actual diagonal scores of Eq.~\eqref{eq:ucb}.
The notation $m_t$ avoids conflating a fitted score with the conditional mean $\mu_t$.
The fitted score need not lie in $[0,1]$.
Throughout, $a_t$ maximizes $U_t$ over the proposed candidates, with a fixed tie-breaking rule.

\paragraph{A one-step comparator, not an oracle search trajectory.}
The full-arm comparator uses the archive \emph{actually reached by DAGO} at that round.
An optimizer making different earlier selections would generally create a different archive and face different arms.
Consequently, the analysis does not compare DAGO with the best possible sequence of workflow-generation decisions or with the globally optimal program.
Expanding the archive changes the available arms; it does not, by itself, imply that the conditional reward of a fixed arm changes.
The notation $\mu_t$ nevertheless allows changes in the reward law.

\subsection{Separating candidate proposal from arm selection}
\label{app:proposal_gap}

Let
\begin{equation}
 \bar a_t^\star\in\arg\max_{a\in\bar{\mathcal{A}}_t}\mu_t(a),
 \qquad
 a_t^\star\in\arg\max_{a\in\mathcal{A}_t}\mu_t(a).
\end{equation}
Define the proposal gap $g_t=\mu_t(\bar a_t^\star)-\mu_t(a_t^\star)\ge0$ and selection gap
$\Delta_t=\mu_t(a_t^\star)-\mu_t(a_t)\ge0$.
For any realized run of $K$ completed rounds,
\begin{equation}
 R_K^{\mathrm{archive}}
 :=\sum_{t=1}^{K}\bigl(\mu_t(\bar a_t^\star)-\mu_t(a_t)\bigr)
 =\underbrace{\sum_{t=1}^{K}g_t}_{\text{proposal loss}}
 +\underbrace{\sum_{t=1}^{K}\Delta_t}_{R_K^{\mathrm{sel}}}.
 \label{eq:proposal_selection_identity}
\end{equation}
This is an exact identity for arbitrary reward functions.
No selector restricted to $\mathcal{A}_t$ can eliminate loss caused by missing a useful arm during proposal.

\begin{proposition}[Coverage with rejection sampling to distinct arms]
\label{prop:proposal_coverage}
Condition on the history before proposal and on the round's full-arm reward function.
Let $F_t=|\bar{\mathcal{A}}_t|$, $M_t=\min\{M,F_t\}$, and let $Q_t$ be a fixed distribution with positive mass on every feasible arm.
Draw independent \emph{raw proposals} from $Q_t$, rejecting repetitions until $M_t$ distinct arms have been accepted.
For $\gamma\ge0$, define
\begin{equation}
 \mathcal{S}_{t,\gamma}
 =\{a:\mu_t(a)\ge \max_{b\in\bar{\mathcal{A}}_t}\mu_t(b)-\gamma\},
 \qquad q_{t,\gamma}=Q_t(\mathcal{S}_{t,\gamma}).
\end{equation}
If proposal randomization does not change the reward distribution of a fixed arm, then
\begin{equation}
 \Pr(g_t>\gamma)\le(1-q_{t,\gamma})^{M_t}
 \le e^{-M_t q_{t,\gamma}},
 \qquad
 \mathbb{E}[g_t]\le\gamma+e^{-M_t q_{t,\gamma}}.
 \label{eq:proposal_coverage_bound}
\end{equation}
The first relation is an inequality, not the equality for $M_t$ independent draws with replacement.
\end{proposition}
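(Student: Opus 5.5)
The plan is to reduce the event $\{g_t>\gamma\}$ to a coverage failure and then bound the probability of that failure under rejection sampling.

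\textbf{Step 1: reduce to a coverage event.} If the candidate set $\mathcal{A}_t$ contains some arm of $\mathcal{S}_{t,\gamma}$, then $\mu_t(a_t^\star)\ge\max_b\mu_t(b)-\gamma$, so $g_t\le\gamma$. Hence $\{g_t>\gamma\}\subseteq\{\mathcal{A}_t\cap\mathcal{S}_{t,\gamma}=\emptyset\}$. Here the hypothesis that proposal randomization does not alter the reward law is used: conditioning on $\mathcal{H}_t$, the function $\mu_t$ is fixed before proposal, so $\mathcal{S}_{t,\gamma}$ and $q_{t,\gamma}$ are deterministic given the pre-proposal history. The case $q_{t,\gamma}=1$ and the case $M_t=F_t$ are trivial, since then $\mathcal{S}_{t,\gamma}$ is certainly covered. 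So assume $q:=q_{t,\gamma}<1$ and $M_t<F_t$.

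\textbf{Step 2: bound the probability of missing $\mathcal{S}:=\mathcal{S}_{t,\gamma}$.} This is the main step. Rejection sampling to distinct arms is equivalent to sequential sampling without replacement, where each new arm is drawn from $Q_t$ restricted and renormalized to the arms not yet accepted. Given that the first $k$ accepted arms $A_k$ all lie outside $\mathcal{S}$, the next accepted arm is outside $\mathcal{S}$ with probability
\[
\frac{Q_t(\mathcal{S}^c)-Q_t(A_k)}{1-Q_t(A_k)}\le Q_t(\mathcal{S}^c)=1-q,
\]
because $(x-y)/(1-y)\le x$ for $0\le y\le x\le1$. Chaining these conditional probabilities over the $M_t$ acceptances gives $\Pr(\mathcal{A}_t\cap\mathcal{S}=\emptyset)\le(1-q)^{M_t}$. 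The only subtlety is justifying the sequential renormalized description of rejection sampling. This follows because each accepted draw, conditional on the previously accepted arms, is the first raw proposal landing outside them, which has exactly the renormalized law. The result is an inequality rather than an equality because renormalization shrinks the miss probability.

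\textbf{Step 3: exponential bound and expectation.} Use $1-q\le e^{-q}$ to obtain $(1-q)^{M_t}\le e^{-M_tq}$. For the expectation, note that $\mu_t\in[0,1]$ implies $g_t\in[0,1]$. Therefore
\[
\mathbb{E}[g_t]\le\gamma\Pr(g_t\le\gamma)+1\cdot\Pr(g_t>\gamma)\le\gamma+e^{-M_tq_{t,\gamma}}.
\]
This holds conditionally on the history, and hence unconditionally for the conditional version stated.
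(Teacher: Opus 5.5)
Your proof is correct, but Step 2 takes a different route from the paper. You use the sequential without-replacement description of the accepted arms, which is the conditional law the paper records separately in Eq.~\eqref{eq:unique_arm_conditional}. You bound each conditional miss probability by $(x-y)/(1-y)\le x$, with $x=1-q_{t,\gamma}$ and $y=Q_t(A_k)$, and then chain the $M_t$ factors.

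The paper instead couples the procedure with the underlying i.i.d.\ raw-proposal sequence. At least $M_t$ raw draws are always consumed, so every arm among the first $M_t$ raw proposals ends up in $\mathcal{A}_t$. Hence if $\mathcal{A}_t$ misses $\mathcal{S}_{t,\gamma}$, the first $M_t$ independent draws must also miss it, an event of probability exactly $(1-q_{t,\gamma})^{M_t}$.

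The coupling is shorter. It needs neither the renormalized conditional law (so no strong-Markov justification at the acceptance times) nor any per-step algebra. It also shows directly why the with-replacement probability is an upper bound.

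Your chain, in exchange, keeps the history-dependent factors $\bigl(1-q_{t,\gamma}-Q_t(A_k)\bigr)/\bigl(1-Q_t(A_k)\bigr)$. These expose the slack $y\,q_{t,\gamma}/(1-y)$ lost at each step. When $Q_t(A_k)$ is deterministic, as for uniform $Q_t$, the product gives the exact hypergeometric miss probability $\binom{F_t-s}{M_t}/\binom{F_t}{M_t}$ that the paper states separately.

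One small wording point: the paper's $\mathcal{H}_t$ already includes the realized candidates. So the invariance hypothesis should be invoked to say that $\mu_t$, and hence $\mathcal{S}_{t,\gamma}$ and $q_{t,\gamma}$, are fixed by the pre-proposal history. That is what your argument actually uses.
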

\begin{proof}
Every feasible arm has positive probability, so collecting $M_t\le F_t$ distinct arms terminates almost surely.
Couple this procedure with the same infinite sequence of independent raw proposals and let $L_t\ge M_t$ be the number consumed before termination.
Every arm appearing in the first $M_t$ raw proposals belongs to the final accepted set: it is either accepted on first appearance or was accepted earlier.
Therefore, if the final candidate set misses $\mathcal{S}_{t,\gamma}$, the first $M_t$ raw proposals must also miss it.
The latter event has probability $(1-q_{t,\gamma})^{M_t}$, proving the first inequality.
The inequality $1-u\le e^{-u}$ gives the exponential bound.
Finally, $0\le g_t\le1$ implies $\mathbb{E}[g_t]\le\gamma+\Pr(g_t>\gamma)$.
\end{proof}

\paragraph{Accepted arms are dependent.}
Let $\mathcal{U}_{m-1}$ be the first $m-1$ accepted arms.
For $a\notin\mathcal{U}_{m-1}$, rejection sampling gives
\begin{equation}
 \Pr(A_m=a\mid\mathcal{U}_{m-1})
 =\frac{Q_t(a)}{1-Q_t(\mathcal{U}_{m-1})},
 \qquad Q_t(\mathcal{U})=\sum_{b\in\mathcal{U}}Q_t(b).
 \label{eq:unique_arm_conditional}
\end{equation}
Previously accepted arms have probability zero.
Thus accepted candidates are neither independent nor, in general, uniformly sampled.
For example, if $Q_t$ is uniform and $s=|\mathcal{S}_{t,\gamma}|$, the exact miss probability is $\binom{F_t-s}{M_t}/\binom{F_t}{M_t}$, with value zero when $F_t-s<M_t$.
In particular, collecting the entire finite arm space gives $g_t=0$.

The reward-invariance condition concerns the specified fusion interface, which receives only the selected parents and their summaries, not the other candidates; it also requires proposal draws not to alter the conditional fusion-randomness law.
The coupling comparison holds for the same $Q_t$ and the same target of $M_t$ candidates, not for the same number of raw sampling attempts or the same wall-clock proposal cost.
Rejection sampling may require substantially more than $M_t$ raw proposals, although these attempts consume no validation evaluations.
Moreover, positive support alone does not make $q_{t,\gamma}$ large or ensure a useful numerical bound.
No realizable linear reward law is needed for this coverage result.

\subsection{A decision bound for an arbitrary nonlinear reward function}
\label{app:nonlinear_decision_bound}

Define the actual prediction residual and its span over the current candidates:
\begin{equation}
 e_t(a)=\mu_t(a)-m_t(a),
 \qquad
 \varepsilon_t=\max_{a\in\mathcal{A}_t}e_t(a)-\min_{a\in\mathcal{A}_t}e_t(a).
 \label{eq:residual_span}
\end{equation}
This is a property of the fitted score on the current candidates, not an assumption that the true reward has a small linear approximation error.
The span measures errors in \emph{predicted reward differences}: it is unchanged by adding the same constant to all predictions.
If $|e_t(a)|\le\epsilon_t$ happens to hold for all candidates, then $\varepsilon_t\le2\epsilon_t$, but no such calibration assumption is imposed.

\begin{proposition}[Selection error of the implemented score]
\label{prop:selection_error}
For any nonempty candidate collection, any real-valued predictor $m_t$, any nonnegative exploration proxy $h_t$, any $\nu\ge0$, and any reward function $\mu_t:\mathcal{A}_t\to[0,1]$, the maximizer of $m_t+\nu h_t$ satisfies
\begin{equation}
 0\le\Delta_t
 \le e_t(a_t^\star)-e_t(a_t)
      +\nu\bigl(h_t(a_t)-h_t(a_t^\star)\bigr)
 \le\varepsilon_t+\nu h_t(a_t).
 \label{eq:exact_score_decision_bound}
\end{equation}
\end{proposition}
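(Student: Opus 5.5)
The argument is purely algebraic and uses only the fact that $a_t$ maximizes $U_t=m_t+\nu h_t$ over $\mathcal{A}_t$. No property of the reward law is needed. The plan has three steps: establish the lower bound, rewrite $\Delta_t$ through the residual decomposition and invoke optimality, and then relax the two resulting differences.

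\textbf{Lower bound.} Since $a_t\in\mathcal{A}_t$ and $a_t^\star$ maximizes $\mu_t$ over the nonempty set $\mathcal{A}_t$, we have $\mu_t(a_t^\star)\ge\mu_t(a_t)$, so $\Delta_t\ge0$. The tie-breaking rule matters only in that $a_t$ is some fixed maximizer of $U_t$; any maximizer works.

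\textbf{Middle inequality.} Substitute $\mu_t(a)=m_t(a)+e_t(a)$ for both arms:
\[
\Delta_t=\bigl(m_t(a_t^\star)-m_t(a_t)\bigr)+\bigl(e_t(a_t^\star)-e_t(a_t)\bigr).
\]
Because $a_t$ maximizes $U_t$, we have $m_t(a_t)+\nu h_t(a_t)\ge m_t(a_t^\star)+\nu h_t(a_t^\star)$. Rearranged, this reads $m_t(a_t^\star)-m_t(a_t)\le\nu\bigl(h_t(a_t)-h_t(a_t^\star)\bigr)$. Plugging this into the decomposition gives the middle inequality of \eqref{eq:exact_score_decision_bound}. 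Note that this step never requires $m_t$ to lie in $[0,1]$, and never uses linearity or calibration.

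\textbf{Final inequality.} Bound the two differences separately:
\begin{itemize}
\item Both $a_t^\star$ and $a_t$ lie in $\mathcal{A}_t$, so $e_t(a_t^\star)\le\max_{a\in\mathcal{A}_t}e_t(a)$ and $e_t(a_t)\ge\min_{a\in\mathcal{A}_t}e_t(a)$. Their difference is therefore at most the span $\varepsilon_t$ defined in \eqref{eq:residual_span}.
\item Since $\nu\ge0$ and $h_t(a_t^\star)\ge0$, dropping the negative term gives $\nu\bigl(h_t(a_t)-h_t(a_t^\star)\bigr)\le\nu h_t(a_t)$.
\end{itemize}
The maxima and minima exist because $\mathcal{A}_t$ is finite; it has at most $M_t$ elements.

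\textbf{Where care is needed.} There is no real obstacle here. The only points to watch are that both arms are taken from the same candidate set $\mathcal{A}_t$, so the span bound applies, and that the nonnegativity of $\nu$ and $h_t$ is used only in the final relaxation.
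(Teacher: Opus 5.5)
Your proof is correct and follows the paper's route: use $U_t(a_t)\ge U_t(a_t^\star)$ to bound $m_t(a_t^\star)-m_t(a_t)$, add $e_t(a_t^\star)-e_t(a_t)$ to obtain the middle bound, then relax using the residual span and $h_t(a_t^\star)\ge0$. Your explicit justification of $\Delta_t\ge0$ and of where $\nu\ge0$ enters just fills in details the paper leaves implicit.
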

\begin{proof}
Since $U_t(a_t)\ge U_t(a_t^\star)$,
$m_t(a_t^\star)-m_t(a_t)\le\nu(h_t(a_t)-h_t(a_t^\star))$.
Adding $e_t(a_t^\star)-e_t(a_t)$ to both sides gives the first upper bound.
The residual difference is at most its span and $h_t(a_t^\star)\ge0$, giving the second bound.
\end{proof}

\paragraph{What the bound explains.}
A shared predictor is useful when its reward differences help discriminate promising arms, even if its absolute predictions are biased.
Exploration can deliberately choose an arm with lower predicted reward to collect feedback from less-covered features.
Its immediate decision cost is visible in Eq.~\eqref{eq:exact_score_decision_bound}; any subsequent benefit must appear through better future selection or predictions.
The proposition does \emph{not} prove that exploration reduces future residual spans or that embeddings encode all relevant parent interactions.

\paragraph{A ranking-margin interpretation.}
For any suboptimal candidate $a$, it cannot be selected whenever
\begin{equation}
 \mu_t(a_t^\star)-\mu_t(a)
 >e_t(a_t^\star)-e_t(a)
   +\nu\bigl(h_t(a)-h_t(a_t^\star)\bigr),
 \label{eq:ranking_margin}
\end{equation}
since then $U_t(a_t^\star)>U_t(a)$.
This relates a selection decision to its prediction-difference error and exploration preference without attributing linear structure to the fusion operator.

\subsection{Accumulated diagonal exploration bonuses}
\label{app:diagonal_potential}

The following calculation concerns the \emph{implemented} statistic, rather than a full-matrix confidence ellipsoid.
It requires a finite feature-norm bound and unchanged additive diagonal updates; it does not require any reward model.
For a finite run one may use $L=\max_{1\le t\le K}\|\mathbf{x}_t\|_2$ in the statement; claiming an asymptotic rate would require a uniform bound as the horizon grows.
No extra feature normalization is introduced into DAGO by this analysis.

\begin{proposition}[A pathwise bound on diagonal bonuses]
\label{prop:diagonal_bonus_sum}
Suppose $d_{0,k}=\lambda>0$, $d_{t,k}=d_{t-1,k}+x_{t,k}^2$, and $\|\mathbf{x}_t\|_2\le L$.
Let $h_t(a_t)^2=\sum_k x_{t,k}^2/d_{t-1,k}$ and $c_{\lambda,L}=1+L^2/\lambda$.
Then
\begin{equation}
 \sum_{t=1}^{K}h_t(a_t)
 \le \sqrt{Kc_{\lambda,L}\sum_{k=1}^{p}
          \log\!\left(1+\frac{\sum_{t=1}^{K}x_{t,k}^2}{\lambda}\right)}
 \le\sqrt{Kc_{\lambda,L}p\log\!\left(1+\frac{KL^2}{p\lambda}\right)}.
 \label{eq:diagonal_bonus_sum}
\end{equation}
\end{proposition}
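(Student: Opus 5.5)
The proof uses the standard elliptical-potential argument, applied one coordinate at a time. It has three steps: Cauchy–Schwarz, a per-coordinate logarithmic bound, and Jensen over coordinates.

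\textbf{Step 1 (Cauchy–Schwarz).} By Cauchy–Schwarz over rounds,
\[
\sum_{t=1}^{K}h_t(a_t)\le\sqrt{K\sum_{t=1}^{K}h_t(a_t)^2}.
\]
Write $u_{t,k}=x_{t,k}^2/d_{t-1,k}\ge0$. It then suffices to show
\[
\sum_{t}\sum_k u_{t,k}\le c_{\lambda,L}\sum_k\log\bigl(d_{K,k}/\lambda\bigr),
\]
because $d_{K,k}=\lambda+\sum_{t}x_{t,k}^2$.

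\textbf{Step 2 (per-coordinate telescoping).} This is the step that needs care. Since $d_{t,k}/d_{t-1,k}=1+u_{t,k}$, we have
\[
\sum_t\log(1+u_{t,k})=\log(d_{K,k}/\lambda).
\]
We need the elementary inequality $u\le c\log(1+u)$ on a bounded range of $u$. Because $d_{t-1,k}\ge\lambda$ and $x_{t,k}^2\le\|\mathbf{x}_t\|_2^2\le L^2$, each $u_{t,k}\in[0,L^2/\lambda]$. The function $u/\log(1+u)$ is increasing, so on $[0,U]$ it is at most $U/\log(1+U)$. Moreover $U/\log(1+U)\le 1+U$, which follows from $\log(1+U)\ge U/(1+U)$. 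Hence $u\le(1+L^2/\lambda)\log(1+u)$ for every $u_{t,k}$, and $c_{\lambda,L}=1+L^2/\lambda$ suffices.

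\textbf{Step 3 (combine and apply Jensen).} Summing Step 2 over $t$ and $k$ and inserting into Step 1 gives the first inequality of the statement. For the second inequality, use concavity of $\log$ (Jensen over the $p$ coordinates):
\[
\sum_k\log\Bigl(1+\frac{S_k}{\lambda}\Bigr)\le p\log\Bigl(1+\frac{\sum_k S_k}{p\lambda}\Bigr),
\qquad S_k=\sum_t x_{t,k}^2 .
\]
Then $\sum_k S_k=\sum_t\|\mathbf{x}_t\|_2^2\le KL^2$, and monotonicity of $\log$ finishes the bound.

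Nothing in the argument uses rewards, the choice of $a_t$, or any full Gram matrix. The bound is pathwise, holding for any sequence of selected features that satisfies the norm bound. The main obstacle is only the constant in Step 2: I would verify the monotonicity of $u/\log(1+u)$, or directly check that $(1+U)\log(1+u)-u\ge0$ on $[0,U]$. The latter works because the expression is zero at $u=0$ and its derivative $(1+U)/(1+u)-1$ is nonnegative on $[0,U]$; this direct check also avoids the monotonicity argument.
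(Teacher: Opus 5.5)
Your proposal is correct and follows the paper's proof essentially step for step. Both use $u_{t,k}=x_{t,k}^2/d_{t-1,k}\in[0,L^2/\lambda]$, derive $u_{t,k}\le c_{\lambda,L}\log(1+u_{t,k})$ from $\log(1+u)\ge u/(1+u)$, telescope $\log(d_{t,k}/d_{t-1,k})$ to $\log(d_{K,k}/\lambda)$, apply Cauchy--Schwarz over rounds, and use concavity of the logarithm with $\sum_{t,k}x_{t,k}^2\le KL^2$ for the second inequality; the paper obtains the constant in one line, $u\le(1+u)\log(1+u)\le(1+L^2/\lambda)\log(1+u)$, so your monotonicity detour is unnecessary but harmless.
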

\begin{proof}
Set $u_{t,k}=x_{t,k}^2/d_{t-1,k}$.
Because $d_{t-1,k}\ge\lambda$, $0\le u_{t,k}\le L^2/\lambda$.
The elementary inequality $\log(1+u)\ge u/(1+u)$ gives
$u_{t,k}\le c_{\lambda,L}\log(1+u_{t,k})$.
Therefore,
\begin{align}
 \sum_{t=1}^{K}h_t(a_t)^2
 &=\sum_{t=1}^{K}\sum_{k=1}^{p}u_{t,k}\\
 &\le c_{\lambda,L}\sum_{k=1}^{p}\sum_{t=1}^{K}
       \log\frac{d_{t,k}}{d_{t-1,k}}
 =c_{\lambda,L}\sum_{k=1}^{p}\log\frac{d_{K,k}}{\lambda}.
\end{align}
Cauchy--Schwarz yields the first inequality in Eq.~\eqref{eq:diagonal_bonus_sum}.
Concavity of the logarithm and $\sum_{t,k}x_{t,k}^2\le KL^2$ yield the second.
\end{proof}

\begin{corollary}[Error-dependent archive-relative decision bound]
\label{cor:archive_decision_bound}
Under the update and feature conditions of Proposition~\ref{prop:diagonal_bonus_sum},
\begin{equation}
 R_K^{\mathrm{archive}}
 \le\sum_{t=1}^{K}g_t+\sum_{t=1}^{K}\varepsilon_t
 +\nu\sqrt{Kc_{\lambda,L}p\log\!\left(1+\frac{KL^2}{p\lambda}\right)}.
 \label{eq:archive_decision_bound}
\end{equation}
The bound also holds at an early-stopped horizon because each ingredient is pathwise.
\end{corollary}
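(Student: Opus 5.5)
The corollary follows by chaining three results already in place: the exact decomposition \eqref{eq:proposal_selection_identity}, the per-round bound of Proposition~\ref{prop:selection_error}, and the pathwise bonus bound of Proposition~\ref{prop:diagonal_bonus_sum}. No probabilistic argument is needed, because every ingredient holds realization by realization.

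\textbf{Step 1 (split the loss).} Start from \eqref{eq:proposal_selection_identity}. It holds for arbitrary reward functions and writes $R_K^{\mathrm{archive}}=\sum_{t}g_t+\sum_t\Delta_t$. The proposal-loss term is carried into the bound unchanged.

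\textbf{Step 2 (bound each selection gap).} For every $t\le K$, apply Proposition~\ref{prop:selection_error} with the actual scores $m_t$, $h_t$ and $\nu\ge0$. This gives $\Delta_t\le\varepsilon_t+\nu h_t(a_t)$, using that $h_t(a_t^\star)\ge0$ and that the residual difference is at most the span. Summing over $t$ yields $\sum_t\Delta_t\le\sum_t\varepsilon_t+\nu\sum_t h_t(a_t)$.

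\textbf{Step 3 (bound the bonus sum).} Here $h_t(a_t)$ is the same diagonal quantity $\sqrt{\sum_k x_{t,k}^2/d_{t-1,k}}$ appearing in Proposition~\ref{prop:diagonal_bonus_sum}. The statistics are updated exactly by \eqref{eq:diagonal_update} starting from $d_{0,k}=\lambda$. With $\|\mathbf{x}_t\|_2\le L$, the second inequality of \eqref{eq:diagonal_bonus_sum} gives $\sum_{t=1}^K h_t(a_t)\le\sqrt{Kc_{\lambda,L}p\log(1+KL^2/(p\lambda))}$. Multiplying by $\nu\ge0$ and combining with Steps 1--2 yields \eqref{eq:archive_decision_bound}.

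\textbf{Step 4 (early stopping).} The main point to check is that the bound survives a random stopping time $K$. Identity \eqref{eq:proposal_selection_identity} and Proposition~\ref{prop:selection_error} are deterministic statements about each realized round. Proposition~\ref{prop:diagonal_bonus_sum} is a deterministic statement about any realized feature sequence of length $K$. Its proof telescopes $\log(d_{t,k}/d_{t-1,k})$ up to the realized $K$ and never uses that $K$ was fixed in advance. Hence the inequality holds for whatever horizon the budget or early-stopping rule produces, with $L$ taken as $\max_{t\le K}\|\mathbf{x}_t\|_2$ if no uniform bound is available.
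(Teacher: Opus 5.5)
Your proposal is correct and follows the paper's proof, which simply combines the identity \eqref{eq:proposal_selection_identity}, Proposition~\ref{prop:selection_error}, and Proposition~\ref{prop:diagonal_bonus_sum} in the same order you do. Your Step~4 spells out the pathwise justification for an early-stopped horizon that the paper asserts in a single sentence.
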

\begin{proof}
Combine Eq.~\eqref{eq:proposal_selection_identity}, Proposition~\ref{prop:selection_error}, and Proposition~\ref{prop:diagonal_bonus_sum}.
\end{proof}

This is an \emph{error-dependent diagnostic}, not an assertion of sublinear regret.
Only the accumulated exploration proxy is bounded directly by the chosen update.
Neither $\sum_tg_t$ nor $\sum_t\varepsilon_t$ is shown to be sublinear, and either can grow linearly.
For fixed $p,\lambda,L,\nu$, sublinear growth of both terms would be sufficient for vanishing average archive-relative gaps; the reported workflow experiments do not establish those premises.
Moreover, with $p=Jd=12{,}800$ and at most $30$ fusion decisions in the reported configuration, the numerical bound can be vacuous relative to the elementary bound $R_K^{\mathrm{archive}}\le K$.
It should not be presented as a quantitative explanation of the observed gains.

\subsection{Why standard linear-bandit guarantees do not transfer automatically}
\label{app:reference_vs_implemented}

Classical linear-bandit confidence arguments rely on a realizable reward model, appropriately controlled noise and feature norms, a full Gram matrix, and a compatible confidence radius~\citep{NIPS2011_e1d5be1c}.
Misspecification is not innocuous: existing work demonstrates failure modes and feature-dimension-dependent approximation effects~\citep{ghosh2017misspecified,lattimore2020goodfeatures}.
DAGO instead uses a fixed exploration coefficient and a diagonal approximation to both estimation and exploration.
Simply adding an unspecified linear-approximation term to a classical rate does not establish a theorem for this rule.

\paragraph{Both the prediction and the exploration metric change.}
For the same observed data, define
\begin{equation}
 \mathbf{G}_t=\lambda\mathbf{I}+\sum_{s=1}^{t}\mathbf{x}_s\mathbf{x}_s^\top,
 \qquad \mathbf{D}_t=\mathrm{diag}(\mathbf{G}_t),
 \qquad \mathbf{b}_t=\sum_{s=1}^{t}r_s\mathbf{x}_s.
\end{equation}
The full ridge solution is $\widehat{\boldsymbol{\theta}}_t^{\mathrm{full}}=\mathbf{G}_t^{-1}\mathbf{b}_t$.
DAGO uses $\widehat{\boldsymbol{\theta}}_t^{\mathrm{diag}}=\mathbf{D}_t^{-1}\mathbf{b}_t$.
Thus, analyzing only the difference between
$\sqrt{\mathbf{x}^\top\mathbf{G}_t^{-1}\mathbf{x}}$ and
$\sqrt{\mathbf{x}^\top\mathbf{D}_t^{-1}\mathbf{x}}$
omits a simultaneous change in the fitted mean.

\paragraph{An explicit correlated-feature example.}
Take $n$ noiseless observations with feature $(1,1)^\top$, reward $1$, and $\lambda>0$.
These observations are compatible with the exactly linear parameter $(1/2,1/2)^\top$.
Then
\begin{equation}
 \mathbf{G}_n=\begin{pmatrix}\lambda+n&n\\n&\lambda+n\end{pmatrix},
 \qquad \mathbf{D}_n=(\lambda+n)\mathbf{I},
 \qquad \mathbf{b}_n=(n,n)^\top.
\end{equation}
At the observed feature, the diagonal fitted value is $2n/(\lambda+n)\to2$, whereas the full ridge fitted value is $2n/(\lambda+2n)\to1$.
The diagonal estimate is therefore not automatically consistent even on these realizable data.
For the unobserved contrast $\mathbf{x}_{-}=(1,-1)^\top$,
\begin{equation}
 \mathbf{x}_{-}^\top\mathbf{G}_n^{-1}\mathbf{x}_{-}=2/\lambda,
 \qquad
 \mathbf{x}_{-}^\top\mathbf{D}_n^{-1}\mathbf{x}_{-}=2/(\lambda+n)\to0.
 \label{eq:diagonal_counterexample}
\end{equation}
Repeated observations of the common direction make the diagonal exploration proxy shrink even in a contrast direction whose full-matrix uncertainty has not decreased.
Conversely, along $(1,1)^\top$ the diagonal proxy exceeds the full one.
There is no universal conservative ordering between the two metrics.

\paragraph{A conditional comparison when correlations are small.}
For completeness, let $\mathbf{C}_t=\mathbf{D}_t^{-1/2}\mathbf{G}_t\mathbf{D}_t^{-1/2}-\mathbf{I}$.
If the measurable condition $\|\mathbf{C}_t\|_2\le\rho<1$ holds, spectral bounds give
\begin{equation}
 \frac{\mathbf{x}^\top\mathbf{D}_t^{-1}\mathbf{x}}{1+\rho}
 \le\mathbf{x}^\top\mathbf{G}_t^{-1}\mathbf{x}
 \le\frac{\mathbf{x}^\top\mathbf{D}_t^{-1}\mathbf{x}}{1-\rho}.
 \label{eq:conditional_metric_comparison}
\end{equation}
Also,
\begin{equation}
 |\mathbf{x}^\top(\widehat{\boldsymbol{\theta}}_t^{\mathrm{diag}}-
                    \widehat{\boldsymbol{\theta}}_t^{\mathrm{full}})|
 \le\frac{\rho}{1-\rho}\,
 \|\mathbf{D}_t^{-1/2}\mathbf{x}\|_2
 \|\mathbf{D}_t^{-1/2}\mathbf{b}_t\|_2.
 \label{eq:conditional_mean_comparison}
\end{equation}
Indeed, the eigenvalues of $\mathbf{I}+\mathbf{C}_t$ lie in $[1-\rho,1+\rho]$, and
$\|\mathbf{I}-(\mathbf{I}+\mathbf{C}_t)^{-1}\|_2\le\rho/(1-\rho)$.
These yield Eqs.~\eqref{eq:conditional_metric_comparison}--\eqref{eq:conditional_mean_comparison} by a change of coordinates and Cauchy--Schwarz.
No small-correlation condition is assumed for the reported experiments.
The calculation identifies what would need to be checked rather than restoring a missing guarantee.

\subsection{Representation limits and the final-workflow objective}
\label{app:objective_limits}

\paragraph{Additivity does not represent general parent interactions.}
Partitioning $\widehat{\boldsymbol{\theta}}$ into $J$ blocks yields
\begin{equation}
 m_t(a)=\sum_{j=1}^{J}\mathbf{e}_{(j)}^\top\widehat{\boldsymbol{\theta}}_{t-1,j}.
\end{equation}
Thus the prediction is additive across parent roles and has no explicit products between parent embeddings.
For a two-role example with scalar parent features $u,v\in\{0,1\}$, the reward $\mu(u,v)=uv$ cannot be represented by any additive model $c+\alpha u+\beta v$: the first three input pairs force $c=\alpha=\beta=0$, contradicting the reward at $(1,1)$.
This illustrates why the method is described as a useful selection surrogate, not a learned model of arbitrary structural complementarity.
Interaction features or nonlinear surrogates are possible extensions, but are not part of the reported algorithm.

\paragraph{Expected fusion reward and best observed child are different objectives.}
Consider a fixed archive with arm $a$ always producing reward $0.6$ and arm $b$ producing $1$ with probability $0.5$ and $0$ otherwise, independently across pulls.
The higher mean is $\mu(a)=0.6>0.5=\mu(b)$, so always selecting $a$ has zero cumulative regret to the mean-optimal arm.
Yet after $K\ge2$ pulls, the expected best reward from $b$ is $1-2^{-K}>0.6$.
Consequently, small selection regret does not imply optimal best-child discovery, even in a stationary two-arm example.
Returning the largest stored validation score also does not guarantee the largest true test performance, particularly with adaptive validation reuse or unequal evaluation repetition counts.

\subsection{Evidence and falsifiable diagnostics}
\label{app:diagnostics}

The existing random-arm and $\nu=0$ ablations assess whether feedback-guided selection and the exploration term help end-to-end search under the reported protocol.
They do not estimate $g_t$ or $\varepsilon_t$, prove linearity, or calibrate $h_t$ as a confidence interval.
The following diagnostics specify additional tests; \emph{no new numerical results from these tests are claimed here}.

\paragraph{Prequential prediction records.}
Save every candidate arm, its features, prediction, exploration proxy, proposal seed, selected arm, and eventual reward \emph{before} updating the model.
Evaluate predictions against subsequently observed rewards, not against the same rewards after refitting.
Selected-arm residuals are useful but do not reveal errors on unselected arms or certify the residual span.

\paragraph{Frozen-archive, multi-arm probes.}
At selected checkpoints, freeze the archive, candidate collection, model state, prompts, and execution settings.
Allocate a separately reported diagnostic budget to multiple randomly sampled candidate arms and independently generated children per arm.
Repeated executions of each child separate execution variation from generation variation.
Compare predicted differences, empirical reward differences, and the observed quality of arms selected by random, greedy, and exploration-aware rules.
Finite probes can estimate local ranking behavior, but cannot identify the full-archive optimum or uniformly bound unobserved arms.

\paragraph{Matched-budget end-to-end controls.}
Use the same initialization protocol, proposal mechanism, execution models, validation splits, and evaluation accounting for random, greedy, diagonal, and full-matrix selectors.
Report repeated-run variation as well as mean outcomes.
A full-matrix ridge control can be implemented through the low-rank identity in Appendix~\ref{app:computational_details}; it need not allocate a dense $p\times p$ matrix.
Any additional diagnostic evaluations must be separated from the search budget used for headline comparisons.

\section{Hyperparameter Settings}
\label{app:hyperparameters}
%==============================================================================

This appendix summarizes the hyperparameter configuration of DAGO used in all experiments.
Unless otherwise noted, we use the default settings in Table~\ref{tab:hyperparams} across benchmarks.
For reproducibility, we organize hyperparameters by: (i) search budget and early stopping,
(ii) candidate-arm proposal, (iii) LinUCB guidance, (iv) representations and embeddings,
(v) LLM roles and decoding, and (vi) evaluation.

\begingroup
\small
\setlength{\tabcolsep}{3.4pt}
\begin{longtable}{@{}>{\raggedright\arraybackslash}p{1.6cm}>{\raggedright\arraybackslash}p{3.1cm}>{\raggedright\arraybackslash}p{2.1cm}>{\raggedright\arraybackslash}p{6.4cm}@{}}
\caption{DAGO hyperparameters (default unless otherwise specified).}
\label{tab:hyperparams}\\
\toprule
\textbf{Category} & \textbf{Parameter} & \textbf{Value} & \textbf{Description / usage} \\
\midrule
\endfirsthead
\caption[]{DAGO hyperparameters (continued).}\\
\toprule
\textbf{Category} & \textbf{Parameter} & \textbf{Value} & \textbf{Description / usage} \\
\midrule
\endhead
\endfoot
\bottomrule
\endlastfoot
\multirow{4}{*}{Budget}
& Initial population size $N$ & 10
& Number of diverse workflows generated at initialization (inserted into the DAG before bandit-guided fusion). \\*
& Max iterations $T$ & 30
& Maximum number of fusion iterations in Algorithm~\ref{alg:dago}. \\*
& Stopping window $C_{\mathrm{patience}}$ & 5
& Check for a lack of relative improvement over the most recent five fusion-round validation scores. \\*
& Minimum iterations $T_{\min}$ & 20
& Enable early stopping only after at least $T_{\min}$ iterations. \\
\midrule
\multirow{5}{=}{\raggedright Arm proposal}
& Candidate-arm cap $M$ & 200
& Propose $M_t=\min\{M,\binom{|\mathcal{V}_{t-1}|}{J}\}$ distinct parent tuples in round $t$. \\*
& Parents per arm $J$ & 5
& Number of parent workflows fused to synthesize one child. \\*
& Temperature $\tau_{\text{init}}\rightarrow\tau_{\text{final}}$ & $2.0\rightarrow0.5$
& Linear temperature annealing for score-based softmax proposal (Appendix~\ref{app:sampling}). \\*
& Minimum probability $p_{\min}$ & 0.01
& Probability floor to prevent proposal collapse and ensure coverage over all nodes. \\*
& Within-arm duplicates & No
& Parents are sampled without replacement; $J$ is fixed during each search, with $N\ge J$. \\
\midrule
\multirow{4}{*}{LinUCB}
& Exploration coefficient $\nu$ & 0.1
& Exploration weight in Eq.~\eqref{eq:ucb}; $\nu$ is kept constant by default. \\*
& Regularization $\lambda$ & 0.2
& Ridge regularization in the bandit covariance (Eq.~\eqref{eq:ucb}). \\*
& Covariance approximation & Diagonal
& Maintain only $\mathrm{diag}(\mathbf{A})$ to compute uncertainty in $O(Jd)$ (Appendix~\ref{app:linucb}). \\*
& Reward $r$ & $[0,1]$
& Child validation score, normalized to $[0,1]$, used as the bandit reward. \\
\midrule
\multirow{3}{*}{Embeddings}
& Embedding dimension $d$ & 2560
& Dimension of the semantic embedding vector $\mathbf{e}_i$ for each workflow node. \\*
& Embedding model & Qwen3
& Pretrained code embedding model encoding \texttt{code} $\oplus$ \texttt{inner\_prompts}. \\*
& Arm feature $\mathbf{x}$ & Concat
& $\mathbf{x}=[\mathbf{e}_{m_1};\ldots;\mathbf{e}_{m_J}]\in\mathbb{R}^{Jd}$ (Appendix~\ref{app:linucb}). \\
\midrule
\multirow{3}{*}{LLMs}
& Optimization LLM & Claude-3.5-Sonnet
& Used for strategy discovery, initial workflow generation, parent summarization, and fusion. \\*
& Execution LLM & GPT-4o-mini
& Used to execute and evaluate candidate workflows during search and final reporting. \\*
& Generation temperature & 0.0
& Decoding temperature for workflow generation/fusion (summarization typically uses a lower temperature for stability). \\
\midrule
\multirow{4}{*}{Evaluation}
& Validation samples & Dataset-specific
& Fixed full validation splits: GSM8K 264; MATH 119; HumanEval 33; MBPP 86; HotpotQA 200; DROP 200 (Table~\ref{tab:datasets}). \\*
& Test samples & Full set
& Full test split for final reporting. \\*
& Max concurrency & 50
& Maximum number of concurrent evaluations (API/hardware dependent). \\*
& Evaluations per node & 1 / 3
& Root nodes are evaluated once; fused offspring are evaluated 3 times and averaged to reduce variance. \\
\end{longtable}
\endgroup

%==============================================================================
\section{Dataset Details}
\label{app:datasets}
%==============================================================================

This section provides additional dataset information beyond \cref{sec:experiments}, with a focus on the
\emph{online reward estimation} protocol used during search.
To ensure a fair, apples-to-apples comparison, \textbf{we strictly follow the AFlow evaluation protocol for
(i) dataset partitioning, (ii) answer extraction / code execution, and (iii) metric computation}.
Concretely, for each benchmark we use the same ``divided test set'' setup as AFlow: a \emph{fixed} held-out
validation subset is used exclusively to provide online reward feedback during optimization, while the
final numbers are reported on a disjoint test subset. \citep{zhang2025aflow}

\begin{table}[H]
\centering
\caption{Benchmark datasets and evaluation protocol used in our experiments (aligned with AFlow).
\textbf{Val Size} is the fixed validation subset used for reward estimation during search (held fixed across iterations),
and \textbf{Test Size} is the disjoint evaluation subset used for reporting final results.}
\label{tab:datasets}
\small
\setlength{\tabcolsep}{4pt}
\begin{tabular}{lccccl}
\toprule
\textbf{Dataset} & \textbf{Domain} & \textbf{Val Size} & \textbf{Test Size} & \textbf{Metric} & \textbf{Notes} \\
\midrule
GSM8K & Math Reasoning & 264 & 1,055 & Accuracy & Grade-school word problems \\
MATH & Math Reasoning & 119 & 486 & Accuracy & Level 5 (hardest) problems only \\
HumanEval & Code Generation & 33 & 131 & Pass@1 & Python function synthesis \\
MBPP & Code Generation & 86 & 341 & Pass@1 & Mostly Basic Python Problems \\
HotpotQA & Question Answering & 200 & 800 & F1 & Multi-hop reasoning required \\
DROP & Question Answering & 200 & 800 & F1 & Discrete reasoning over paragraphs \\
\bottomrule
\end{tabular}
\end{table}

\paragraph{Evaluation Metrics (standard metrics; computed with the AFlow harness).}
We adopt the \emph{canonical} metrics used by these benchmarks, and compute them using the same evaluation
pipeline as AFlow. \citep{zhang2025aflow}
For mathematical reasoning (GSM8K, MATH), we report exact-match accuracy (a.k.a.\ solve rate) after answer extraction.
GSM8K is evaluated by extracting a final numeric answer from the model output and matching it to the ground truth
(with a small numeric tolerance to handle formatting). \citep{cobbe2021training}
For MATH, the benchmark is designed to be scored by exact match after normalization of the final answer, and we
follow the standard practice of extracting the final boxed answer and checking equivalence (including symbolic
checking for algebraic expressions when needed). \citep{hendrycksmath2021}

For code generation (HumanEval, MBPP), we report Pass@1, the standard unit-test-based functional correctness metric:
a solution is counted as correct if the first generated program passes all provided tests in a sandboxed environment.
This is the established evaluation protocol for HumanEval and widely used for MBPP-style program synthesis benchmarks. \citep{chen2021codex,austin2021program}
We apply code sanitization to extract the target function and run tests with timeouts, matching the AFlow setup. \citep{zhang2025aflow}

For question answering (HotpotQA, DROP), we compute token-level F1 after standard text normalization (lowercasing,
article removal, punctuation stripping, and whitespace normalization). HotpotQA and DROP are commonly reported with
both EM and F1 in their official evaluations; following AFlow, we use F1 as the primary metric in our main tables. \citep{yang-etal-2018-hotpotqa,dua-etal-2019-drop,zhang2025aflow}

\paragraph{Sampling Strategy.}
Following AFlow, validation instances are randomly sampled \emph{once} and then held fixed across all iterations,
so that online rewards are comparable over time and across methods. \citep{zhang2025aflow}
For MATH, we restrict to Level 5 (the hardest subset) to obtain a challenging optimization target.
Every validation execution evaluates the entire fixed validation subset in \cref{tab:datasets}: 264, 119, 33, 86, 200, and 200 instances for GSM8K, MATH, HumanEval, MBPP, HotpotQA, and DROP, respectively. No additional per-round validation subsampling is used.
Final evaluation is performed on the full held-out test subset in \cref{tab:datasets}.
%==============================================================================
\section{LLM Prompt Templates}
\label{app:prompts}
%==============================================================================

This section summarizes the core LLM prompt templates used in DAGO.
We employ role-specialized prompts for: (i) \emph{strategy discovery}, (ii) \emph{initial population generation}, (iii) \emph{parent advantage summarization}, and (iv) \emph{multi-parent fusion}.
To make the optimization process robust, all generation prompts enforce a strict XML response schema that is machine-parseable into \texttt{workflow\_code} and \texttt{prompt} fields (Appendix~\ref{app:algorithm}).

\paragraph{Prompt formatting (reproducibility).}
For readability, we present prompts using a consistent boxed style with syntax highlighting.

\Needspace{8\baselineskip}
\subsection{Shared Constraints and Output Schema}
\label{app:prompt_constraints}

All optimizer-facing prompts append a shared constraint block that (a) defines the required workflow interface, (b) documents the operator return conventions, (c) specifies evaluator interface requirements (output format and parsing conventions), and (d) enforces the XML output schema.

\begin{tcolorbox}[title={\textbf{\small Shared Constraints and Interface Specification}}, boxrule=1pt, arc=0mm, colback=black!5!white, colframe=black!75!white, breakable, before skip=10pt, after skip=10pt, pad at break=2mm, parbox=false]
\begingroup
\fvset{fontsize=\scriptsize, breaklines, breakanywhere, frame=lines, framesep=2mm, tabsize=4}
\input{minted-cache/96B5477A74A2FDBC45B1729C20CE6D55.highlight.minted}
\endgroup
\end{tcolorbox}

\Needspace{8\baselineskip}
\subsection{Strategy Discovery Prompt}
\label{app:prompt_strategy}

The strategy discovery prompt generates diverse workflow strategies for a given task type:

\begin{tcolorbox}[title={\textbf{\small Strategy Discovery Template}}, boxrule=1pt, arc=0mm, colback=black!5!white, colframe=black!75!white, breakable, before skip=10pt, after skip=10pt, pad at break=2mm, parbox=false]
\begingroup
\fvset{fontsize=\scriptsize, breaklines, breakanywhere, frame=lines, framesep=2mm, tabsize=4}
\begin{MintedVerbatim}[commandchars=\\\{\}]
Generate **\PYGZob{}num\PYGZus{}strategies\PYGZcb{}** structurally distinct LLM workflow strategies
for the **\PYGZob{}task\PYGZus{}display\PYGZus{}name\PYGZcb{}** benchmark.

**Context**: You are designing LLM\PYGZhy{}based workflows to solve problems from
the \PYGZob{}task\PYGZus{}display\PYGZus{}name\PYGZcb{} dataset. Each strategy should describe how to
orchestrate LLM calls to solve these problems effectively.

**Requirements**:
1. **Topological Diversity**: Strategies must differ in **control flow**
   (One\PYGZhy{}Shot vs. Linear Chain vs. Recursive/Loop).
2. **Mechanism Fit**: Address specific failure modes of this benchmark
   (e.g., Code \PYGZhy{}\PYGZgt{} Test\PYGZhy{}Driven; Math \PYGZhy{}\PYGZgt{} Step\PYGZhy{}Verification).
3. **Simplicity First**: Default to the simplest effective flow.

**Output JSON**:
[\PYGZob{}
  \PYGZdq{}name\PYGZdq{}: \PYGZdq{}Strategy Name\PYGZdq{},
  \PYGZdq{}control\PYGZus{}flow\PYGZus{}type\PYGZdq{}: \PYGZdq{}One\PYGZhy{}Shot|Linear|Recursive|Tree\PYGZdq{},
  \PYGZdq{}core\PYGZus{}approach\PYGZdq{}: \PYGZdq{}Brief execution steps\PYGZdq{},
  \PYGZdq{}task\PYGZus{}adaptation\PYGZdq{}: \PYGZdq{}Specific failure mode addressed\PYGZdq{},
  \PYGZdq{}complexity\PYGZus{}profile\PYGZdq{}: \PYGZdq{}low|medium|high\PYGZdq{}
\PYGZcb{}]
\end{MintedVerbatim}

\endgroup
\end{tcolorbox}

\Needspace{8\baselineskip}
\subsection{Initial Population Generation Prompt}
\label{app:prompt_initial}

Given a sampled strategy, DAGO prompts the optimizer LLM to generate an executable workflow and its internal prompt set:

\begin{tcolorbox}[title={\textbf{\small Initial Workflow Generation Template}}, boxrule=1pt, arc=0mm, colback=black!5!white, colframe=black!75!white, breakable, before skip=10pt, after skip=10pt, pad at break=2mm, parbox=false]
\begingroup
\fvset{fontsize=\scriptsize, breaklines, breakanywhere, frame=lines, framesep=2mm, tabsize=4}
\begin{MintedVerbatim}[commandchars=\\\{\}]
Design a workflow for **\PYGZob{}task\PYGZus{}type\PYGZcb{}** using strategy: **\PYGZob{}strategy\PYGZus{}choice\PYGZcb{}**.

Workflow Design:
\PYGZhy{} Target 1\PYGZhy{}5 LLM calls (max 8 if truly needed)
\PYGZhy{} Each call must serve a distinct, necessary purpose
\PYGZhy{} Simpler workflows often perform better than complex ones

Inner Prompt Design (EQUALLY IMPORTANT):
\PYGZhy{} Inner prompts determine output quality \PYGZhy{} design them carefully
\PYGZhy{} The ENDING of each prompt is critical \PYGZhy{} it controls output format
\PYGZhy{} Follow the output format in EVALUATOR INTERFACE

\PYGZob{}shared\PYGZus{}constraints\PYGZcb{}  \PYGZsh{} includes operator docs + evaluator interface + XML schema
\end{MintedVerbatim}

\endgroup
\end{tcolorbox}

\Needspace{8\baselineskip}
\subsection{Parent Advantage Summarization Prompt}
\label{app:prompt_summarize}

Before fusion, DAGO analyzes each parent workflow to extract strengths and weaknesses:

\begin{tcolorbox}[title={\textbf{\small Parent Advantage Summarization Template}}, boxrule=1pt, arc=0mm, colback=black!5!white, colframe=black!75!white, breakable, before skip=10pt, after skip=10pt, pad at break=2mm, parbox=false]
\begingroup
\fvset{fontsize=\scriptsize, breaklines, breakanywhere, frame=lines, framesep=2mm, tabsize=4}
\input{minted-cache/EE4994BDDEC461DEC865505480228BC2.highlight.minted}
\endgroup
\end{tcolorbox}

\Needspace{8\baselineskip}
\subsection{Multi-Parent Fusion Prompt}
\label{app:prompt_fusion}

The fusion prompt synthesizes a new workflow by combining strengths from multiple parents:

\begin{tcolorbox}[title={\textbf{\small Multi-Parent Fusion Template}}, boxrule=1pt, arc=0mm, colback=black!5!white, colframe=black!75!white, breakable, before skip=10pt, after skip=10pt, pad at break=2mm, parbox=false]
\begingroup
\fvset{fontsize=\scriptsize, breaklines, breakanywhere, frame=lines, framesep=2mm, tabsize=4}
\begin{MintedVerbatim}[commandchars=\\\{\}]
Create a workflow for **\PYGZob{}task\PYGZus{}type\PYGZcb{}** by improving the BEST parent.

**Parent Workflows** (sorted by score, highest first):
\PYGZob{}parent\PYGZus{}workflows\PYGZus{}info\PYGZcb{}

=== MANDATORY FUSION PROTOCOL ===

**Step 1: COPY the highest\PYGZhy{}scoring parent as your BASELINE**
\PYGZhy{} Your default output should be IDENTICAL to the best parent
\PYGZhy{} Do not \PYGZdq{}rewrite\PYGZdq{} or \PYGZdq{}improve\PYGZdq{} unless you have concrete evidence

**Step 2: Review each parent\PYGZsq{}s FULL analysis summary**
\PYGZhy{} Understand structure (llm\PYGZus{}calls, flow), prompt\PYGZus{}analysis, and evaluator\PYGZus{}alignment
\PYGZhy{} Pay attention to `weaknesses` and `fusion\PYGZus{}guidance`
\PYGZhy{} Identify at most 2\PYGZhy{}3 SMALL, TARGETED changes that address documented issues

**Step 3: For EACH proposed change, answer:**
\PYGZhy{} [ ] What specific failure does this fix? (cite from parent analysis)
\PYGZhy{} [ ] Could this break something that currently works?
\PYGZhy{} [ ] Is this change truly necessary, or am I adding complexity for no reason?
\PYGZhy{} If ANY answer is uncertain \PYGZhy{}\PYGZgt{} DO NOT make this change

**Step 4: Inner Prompt Changes (CRITICAL)**
\PYGZhy{} If changing a prompt ending, quote BOTH the original and your new version
\PYGZhy{} Explain why the new wording better matches evaluator behavior
\PYGZhy{} The last 1\PYGZhy{}2 sentences of each prompt are the most important

=== CHANGE BUDGET ===
\PYGZhy{} **Modifications**: Maximum 2\PYGZhy{}3 targeted changes
\PYGZhy{} **LLM calls**: Keep similar to best parent (\PYGZlt{}=8 total); adding calls requires justification

Types of allowed changes:
1. Fix format instruction \PYGZhy{} make prompt ending match evaluator expectations
2. Remove unnecessary complexity \PYGZhy{} delete redundant LLM calls
3. Fix documented bug \PYGZhy{} address specific weakness from parent analysis

Types of FORBIDDEN changes:
\PYGZhy{} Adding new verification steps \PYGZdq{}just to be safe\PYGZdq{}
\PYGZhy{} Rewriting prompts that already work well
\PYGZhy{} Changing control flow without evidence of improvement
\PYGZhy{} Adding complexity because it \PYGZdq{}sounds better\PYGZdq{}

=== EVALUATOR INTERFACE FOR \PYGZob{}task\PYGZus{}type\PYGZcb{} ===
\PYGZob{}evaluator\PYGZus{}interface\PYGZcb{}  \PYGZsh{} output format requirements

\PYGZob{}shared\PYGZus{}constraints\PYGZcb{}

**OUTPUT**: Think through your changes first, then provide the XML output.
\end{MintedVerbatim}

\endgroup
\end{tcolorbox}

%==============================================================================

\section{Implementation Details of Bandit-Guided Fusion}
\label{app:algorithm}

This appendix specifies the selection loop corresponding to Eqs.~\eqref{eq:proposal_distribution}--\eqref{eq:diagonal_update}.
It distinguishes workflow evaluations from arm scoring, preserves the diagonal numerical update, and makes the budget and feature dimensions explicit.

\subsection{State, dimensions, and reported configuration}
\label{app:implementation_state}

Each node stores an immutable identifier, workflow code $\mathcal{C}_i$, internal prompts $\mathcal{P}_i$, an observed validation score $s_i$, the evaluation repetition count, its fixed embedding $\mathbf{e}_i$, and parent identifiers.
The active node pool includes the initial workflows and subsequently evaluated children.
The bandit state consists of vectors $\mathbf{d},\mathbf{b}\in\mathbb{R}^{p}$ with $p=Jd$; node embeddings and graph edges are stored separately.

\begin{table}[htbp]
\centering
\small
\caption{Reported settings used to specify the revised procedure. The table does not introduce new tuned parameters.}
\label{tab:method_settings}
\begin{tabular}{@{}lll@{}}
\toprule
Component & Parameter & Reported value \\
\midrule
Initialization & $N$, $q_{\mathrm{init}}$ & $10$, $1$ \\
Fusion budget & $T$, $q$, $B_{\max}$ & $30$, $3$, $100$ \\
Stopping & $T_{\min}$, $C_{\mathrm{patience}}$, $\delta_{\mathrm{stop}}$ & $20$, $5$, $10^{-4}$ \\
Candidate cap / arm size & $M$, $J$ & $200$, $5$ \\
Sampling & $\tau_{\mathrm{init}}\to\tau_{\mathrm{final}}$, $p_{\min}$ & $2.0\to0.5$, $0.01$ \\
Bandit & $\lambda$, $\nu$ & $0.2$, $0.1$ \\
Features & $d$, $p=Jd$ & $2560$, $12800$ \\
Representation & Encoder family & Qwen3-Embedding \\
Generation & Optimization model & Claude 3.5 Sonnet \\
Execution & Evaluation model & GPT-4o-mini-0718 \\
\bottomrule
\end{tabular}
\end{table}

The default configuration satisfies $N\ge J$ and $\binom{N}{J}=\binom{10}{5}=252$, so $M_t=200$ throughout the default search. More generally, the candidate count is capped at the number of available parent combinations, while $J$ and the feature dimension remain fixed within each search.
For a deployment with fewer than $J$ initial parents, initialization must be expanded or a fixed padding-and-mask convention must be specified before search; silently changing $J$ while retaining a $Jd$-dimensional state is invalid.
Such an extension is not needed for the reported setting and is not added here.

\subsection{Candidate-arm proposal}
\label{app:sampling}

\paragraph{Stable score sampling.}
For the $n_t=|\mathcal{V}_{t-1}|$ available nodes, compute
\begin{equation}
\begin{aligned}
 w_t(v)&=\exp\!\left(\frac{s(v)-s_{\max}}{\tau_t}\right),\quad
 p_t(v)=\frac{w_t(v)}{\sum_u w_t(u)},\\
 \widetilde p_t(v)&=\frac{\max\{p_t(v),p_{\min}\}}{Z_t},
 \quad Z_t=\sum_u\max\{p_t(u),p_{\min}\}.
\end{aligned}
\end{equation}
Subtracting $s_{\max}$ prevents numerical overflow without changing the distribution.
The renormalized probabilities obey $\widetilde p_t(v)\ge p_{\min}/Z_t>0$, not necessarily $\widetilde p_t(v)\ge p_{\min}$.
Since $1\le Z_t\le1+n_tp_{\min}$, the floor is a support safeguard, not a fixed lower bound on every arm's inclusion probability.
The temperature is computed from the planned maximum $T$, even when search stops early.

\paragraph{Without replacement inside an arm.}
To sample one arm, start with all nodes eligible, draw a parent with probability proportional to its remaining weight $\widetilde p_t(v)$, remove it, and repeat until $J$ distinct parents have been drawn.
Sort the result by decreasing validation score and then increasing node identifier.
For an unordered parent set $S$ of size $J$, the probability of its canonical arm is
\begin{equation}
 Q_t(S)=\sum_{\pi\in\mathrm{Perm}(S)}
 \prod_{j=1}^{J}
 \frac{\widetilde p_t(v_{\pi_j})}
 {1-\sum_{\ell<j}\widetilde p_t(v_{\pi_\ell})}.
 \label{eq:arm_proposal_probability}
\end{equation}
Thus parent draws within an arm are not independent.
The canonical ordering does not affect membership but fixes the feature positions used by the selector.

\paragraph{Round-dependent number of distinct candidates.}
Let $F_t=\binom{n_t}{J}$ be the number of feasible parent combinations before round $t$.
The number of distinct candidates is
\begin{equation}
 M_t=\min\{M,F_t\}
 =\min\left\{M,\binom{|\mathcal{V}_{t-1}|}{J}\right\}.
 \label{eq:candidate_count}
\end{equation}
The candidate set starts empty at each round.
Repeatedly sample one arm from the fixed distribution $Q_t$ in Eq.~\eqref{eq:arm_proposal_probability}, canonicalize it, and insert it only if its parent-identifier sequence has not already been accepted in this round.
Continue until $|\mathcal{A}_t|=M_t$.
Thus $M$ is a \emph{cap}, whereas $M_t$ is the actual number of unique candidate arms, not the number of raw sampling attempts.
Duplicates are determined by parent identities, not by whether different nodes contain identical workflow text.
The same arm may be proposed again in a later round.
Raw proposals are independent conditional on the archive and sampling weights, but accepted arms are dependent with the law in Eq.~\eqref{eq:unique_arm_conditional}; weights are not recomputed after rejected proposals.

When $F_t<M$, all feasible parent combinations are included, without changing $J$ or allowing repeated parents.
For the arm-size ablations with $N=10$ and $M=200$, the first-round candidate counts for $J\in\{1,3,5,10\}$ are $10$, $120$, $200$, and $1$, respectively.
Positive arm probabilities ensure almost-sure termination, although there is no deterministic upper bound on the number of raw proposals.

If $L_t$ is the total number of raw proposals and $\mathcal{U}_m$ the first $m$ accepted arms, the next acceptance has geometric waiting time with success probability $1-Q_t(\mathcal{U}_m)$.
Consequently,
\begin{equation}
 \mathbb{E}[L_t]
 =\sum_{m=0}^{M_t-1}\mathbb{E}\!\left[\frac{1}{1-Q_t(\mathcal{U}_m)}\right]
 \le \frac{1}{q_{\min,t}}\sum_{m=0}^{M_t-1}\frac{1}{F_t-m},
 \quad q_{\min,t}=\min_{a\in\bar{\mathcal{A}}_t}Q_t(a)>0.
 \label{eq:unique_proposal_attempts}
\end{equation}
This finite expectation may be large when proposal mass is concentrated.
Appendix~\ref{app:proposal_gap} analyzes coverage for this rejection-sampling procedure rather than treating the accepted candidates as independent.

\begin{algorithm}[htbp]
\caption{Candidate proposal with a round-dependent cap}
\label{alg:arm_proposal}
\small
\begin{algorithmic}[1]
\REQUIRE Nodes $\mathcal{V}_{t-1}$ with $n_t\ge J$; candidate cap $M\ge1$; $J,\tau_t,p_{\min}$
\STATE $M_t\leftarrow\min\{M,\binom{n_t}{J}\}$
\STATE Compute stable normalized weights $\widetilde p_t$; $\mathcal{A}_t\leftarrow\varnothing$
\WHILE{$|\mathcal{A}_t|<M_t$}
    \STATE Draw $J$ distinct nodes using sequential weighted sampling without replacement
    \STATE Canonically order the nodes by score and identifier to form arm $a$
    \IF{$a\notin\mathcal{A}_t$}
        \STATE $\mathcal{A}_t\leftarrow\mathcal{A}_t\cup\{a\}$
    \ENDIF
\ENDWHILE
\STATE \textbf{return} $\mathcal{A}_t$
\end{algorithmic}
\end{algorithm}

\subsection{Embeddings and the diagonal reward surrogate}
\label{app:embedding}
\label{app:linucb}

\paragraph{Fixed features and cached encodings.}
Construct $\mathbf{e}(v)=\mathrm{Enc}(\mathcal{C}(v)\oplus\mathcal{P}(v))$ once for each immutable workflow text and cache it.
The encoder is not fine-tuned by the bandit.
For arm $a=(v_{(1)},\ldots,v_{(J)})$, concatenate the ordered embeddings into $\mathbf{x}(a)\in\mathbb{R}^{Jd}$.
No interaction features, extra intercept, feature normalization, score clipping, or encoder retraining is introduced by the mathematical revision.
The encoder checkpoint, serialization of the prompt set, truncation length, and any preprocessing used by an implementation must be fixed across training and arm scoring and reported with the code.

\paragraph{Exact diagonal computation.}
Before round $t$, the sufficient statistics are
\begin{equation}
 d_{t-1,k}=\lambda+\sum_{s<t}x_{s,k}^2,
 \qquad b_{t-1,k}=\sum_{s<t}r_sx_{s,k},
 \qquad \widehat\theta_{t-1,k}=b_{t-1,k}/d_{t-1,k}.
 \label{eq:diagonal_statistics}
\end{equation}
Arm predictions and exploration proxies are
\begin{equation}
 m_t(a)=\sum_{k=1}^{p}x_k(a)\widehat\theta_{t-1,k},
 \qquad h_t(a)=\sqrt{\sum_{k=1}^{p}x_k(a)^2/d_{t-1,k}}.
\end{equation}
The selected arm maximizes $m_t(a)+\nu h_t(a)$.
After its child has been evaluated, add $\mathbf{x}_t\odot\mathbf{x}_t$ to $\mathbf{d}$ and $r_t\mathbf{x}_t$ to $\mathbf{b}$.
Each fusion decision contributes one observation even when its child's reward averages multiple validation executions.
Initial nodes do not create synthetic fusion observations.

\paragraph{Relationship to least squares.}
The coefficient vector minimizes the separable quadratic
\begin{equation}
 \frac12\boldsymbol{\theta}^{\top}\mathbf{D}_{t-1}\boldsymbol{\theta}
 -\mathbf{b}_{t-1}^{\top}\boldsymbol{\theta},
\end{equation}
which replaces the full design Gram matrix by its diagonal.
In contrast, multivariate ridge regression minimizes
$\frac12\sum_{s<t}(r_s-\mathbf{x}_s^\top\boldsymbol{\theta})^2+
\frac\lambda2\|\boldsymbol{\theta}\|_2^2$ and retains all cross-feature terms.
The distinction matters for both coefficients and confidence interpretation.

\paragraph{Cold start, histories, and nonstationarity.}
Before the first observed fusion reward, $\mathbf{b}_0=0$, so selection is driven by $\nu\|\mathbf{x}(a)\|_2/\sqrt\lambda$, with identity-based tie breaking.
Without a fixed feature-norm convention, the initial preference may reflect embedding norms.
Storing past arms and outcomes is useful for auditing, but recomputing Eq.~\eqref{eq:diagonal_statistics} from the same history yields the same state.
It is not a mechanism for adapting to reward drift.
Forgetting factors, sliding windows, or full-matrix refits would change the algorithm and require separate evaluation.

\subsection{LLM interaction and evaluation interface}
\label{app:llm_protocol}

\paragraph{Initialization.}
A strategy-discovery call supplies task-level design ideas to initial workflow generation.
Each generated root is evaluated with the fixed validation harness and inserted with its score and embedding.
The test set is excluded from these calls.

\paragraph{Summarization.}
For every selected parent, provide its code, internal prompts, and score to the summarization model.
The structured analysis records call structure, prompt endings, evaluator-interface risks, strengths, weaknesses, and restrictions on safe edits.
These are LLM-generated analyses rather than verified causal attributions.
They are inputs to fusion, not labels for the bandit and not substitutes for child validation.

\paragraph{Constrained multi-parent fusion.}
The fusion input contains all selected parent code and prompts together with their scores and summaries.
The existing template instructs the model to use the highest-scoring parent as the baseline, consider at most two or three targeted changes, and justify any additional calls or control-flow modifications.
Accordingly, ``fusion'' denotes generation conditioned on multiple parents, often through a local edit of the strongest parent; it does not require literal inheritance from every supplied parent.
Parent edges record conditioning provenance.
The prompt and output-schema constraints are generation instructions, not guarantees that the emitted program is valid or improves performance.

\paragraph{Bounded validation reward.}
With metric $\ell\in[0,1]$ and fixed validation set $\mathcal{D}_{\mathrm{val}}$,
\begin{equation}
 \mathrm{Eval}_{q}(w)=\frac1q\sum_{j=1}^{q}
 \frac1{|\mathcal{D}_{\mathrm{val}}|}
 \sum_{(x,y)\in\mathcal{D}_{\mathrm{val}}}
 \ell\bigl(w(x;\xi_j),y\bigr).
 \label{eq:repeated_validation}
\end{equation}
Each $j$ is an execution of the same generated program over the same split, with any execution randomness included in $\xi_j$.
The task-specific harness defines answer parsing, failed-instance scores, and timeouts.
The search score uses the mean over all declared executions, not the maximum or a selectively filtered subset.
Failure handling and retries must be fixed consistently across methods: a failed attempt must not be silently dropped to obtain uncharged additional generation or evaluation opportunities.
An entire-child failure can be assigned a bounded failure reward by the declared harness; this convention must be documented rather than inferred from the selection equations.

\subsection{Budget accounting and early stopping}
\label{app:budget_stopping}

\paragraph{What is counted.}
One budget unit is one execution of the complete validation split for a single workflow.
With $q_{\mathrm{init}}=1$ and $q=3$, a run with $K$ completed fusion rounds uses $B(K)=N+3K\le100$ declared validation-set executions, absent additional charged attempts.
Scoring $M_t$ candidate arms uses cached features and is not $M_t$ workflow evaluations.
Summarization and fusion still consume LLM resources, so equal validation budgets do not imply equal dollar cost.
Any charged retries must be included in the counter before another evaluation is permitted.

\paragraph{Recent-score stopping and incumbent retention.}
The stopping decision and the returned workflow use different information.
The incumbent is the workflow with the highest stored validation score across all initial and fusion nodes; it is updated whenever a new score is strictly higher, retaining the existing incumbent on ties.
Early stopping instead checks whether the most recent $C_{\mathrm{patience}}=5$ fusion-round validation scores show no relative improvement under the stopping tolerance $\delta_{\mathrm{stop}}=10^{-4}$.
Let
\[
 \mathbf{r}^{\mathrm{recent}}_t
 =(r_{t-C_{\mathrm{patience}}+1},\ldots,r_t)
\]
denote this recent-score window. Write $r_0=r_{t-C_{\mathrm{patience}}+1}$
and $r_+=\max\{r_{t-C_{\mathrm{patience}}+2},\ldots,r_t\}$.
Define
\[
 \mathrm{Stagnant}(\mathbf{r}^{\mathrm{recent}}_t;\delta_{\mathrm{stop}})
 =\begin{cases}
   (r_+-r_0)/r_0\le\delta_{\mathrm{stop}}, & r_0>0,\\
   r_+=0, & r_0=0.
 \end{cases}
\]
Thus the best score in the final four rounds is compared with the first score
of the five-round window. A positive score after a zero baseline counts as improvement.
The stopping gate is
\begin{equation}
 t\ge T_{\min}=20
 \quad\text{and}\quad
 \mathrm{Stagnant}(\mathbf{r}^{\mathrm{recent}}_t;\delta_{\mathrm{stop}})=\mathrm{true}.
 \label{eq:early_stopping_gate}
\end{equation}
This is not a counter of consecutive failures to exceed the best initial or historical score.
The window contains completed fusion-round rewards, each averaged over the declared validation executions; initialization evaluations are not window entries.
The minimum applies to completed fusion rounds, not to individual validation executions.
The reported budget permits all $20$ rounds: $Nq_{\mathrm{init}}+20q=70\le100$.
The run ends when the recent-score stopping gate is met or the iteration/evaluation budget is exhausted.

\begin{algorithm}[htbp]
\caption{End-to-end DAGO with explicit budget and stopping state}
\label{alg:dago_full}
\small
\begin{algorithmic}[1]
\REQUIRE $N\ge J$, $M\ge1$; $T,B_{\max}$; $q_{\mathrm{init}},q$; $\lambda,\nu$; sampling settings; $T_{\min},C_{\mathrm{patience}},\delta_{\mathrm{stop}}$
\STATE Check $Nq_{\mathrm{init}}\le B_{\max}$; initialize empty DAG and embedding cache
\STATE Discover task strategies; generate $N$ roots; evaluate each $q_{\mathrm{init}}$ times and insert
\STATE $B\leftarrow Nq_{\mathrm{init}}$; $p\leftarrow Jd$; $\mathbf{d}\leftarrow\lambda\mathbf{1}_p$; $\mathbf{b}\leftarrow\mathbf{0}_p$
\STATE $v^*\leftarrow\arg\max_{v\in\mathcal{V}_0}s(v)$; initialize an empty recent-reward window
\FOR{$t=1$ to $T$}
    \IF{$B+q>B_{\max}$}
        \STATE \textbf{break}
    \ENDIF
    \STATE Compute $\tau_t$ and $\widetilde p_t$; propose candidates using Algorithm~\ref{alg:arm_proposal}
    \STATE $\widehat{\boldsymbol{\theta}}\leftarrow\mathbf{b}\oslash\mathbf{d}$
    \STATE Form $\mathbf{x}(a)$ for candidates and select $a_t\in\arg\max_a[m_t(a)+\nu h_t(a)]$
    \STATE Set $\mathbf{x}_t\leftarrow\mathbf{x}(a_t)$; summarize all selected parents
    \STATE Generate child $w_t$ using parent code, prompts, scores, and summaries
    \STATE $r_t\leftarrow\mathrm{Eval}_{q}(w_t)$; $B\leftarrow B+q$; record the decision and outcome
    \STATE $\mathbf{d}\leftarrow\mathbf{d}+\mathbf{x}_t\odot\mathbf{x}_t$; $\mathbf{b}\leftarrow\mathbf{b}+r_t\mathbf{x}_t$
    \STATE Cache the child's embedding; insert it and all parent edges into the DAG
    \IF{$r_t>s(v^*)$}
        \STATE $v^*\leftarrow v_{\mathrm{new}}$
    \ENDIF
    \STATE Append $r_t$ to the recent-reward window; retain only its last $C_{\mathrm{patience}}$ entries
    \IF{$t\ge T_{\min}$ and $\mathrm{Stagnant}(\mathbf{r}^{\mathrm{recent}}_t;\delta_{\mathrm{stop}})$}
        \STATE \textbf{break}
    \ENDIF
\ENDFOR
\STATE \textbf{return} $v^*$ and the archive
\end{algorithmic}
\end{algorithm}

\subsection{Computational costs and full-matrix comparison}
\label{app:computational_details}

\paragraph{Diagonal selection cost.}
The bandit stores $O(p)$ statistics, scores one arm in $O(p)$ time, scores $M_t$ arms in $O(M_t p)$ time, and updates in $O(p)$ time.
Candidate features can be constructed and scored in a stream, avoiding an additional $O(M_t p)$ materialized feature matrix.
The archive additionally uses $O(nd)$ embedding storage and $O(KJ)$ parent links, apart from workflow text and logs.
Computing parent proposal weights costs $O(n)$. If one raw weighted draw costs $C_{\mathrm{draw}}(n,J)$, proposal costs $O(n+L_t[C_{\mathrm{draw}}(n,J)+J\log J])$ for the realized $L_t$ attempts, using hashing for duplicate rejection. Storing the accepted parent identifiers uses $O(M_t J)$ space. Only the $M_t$ accepted arms need bandit scoring; $L_t$ need not equal $M_t$.
LLM calls, execution, and initial embedding computation are separate from the bandit arithmetic.

\paragraph{Dense full covariance is not the only exact implementation.}
At $p=12{,}800$, a dense full Gram matrix has $p^2=163{,}840{,}000$ entries.
However, with $K\le30$ observations, let $\mathbf{X}\in\mathbb{R}^{K\times p}$ contain their features.
The Woodbury identity gives
\begin{equation}
 (\lambda\mathbf{I}+\mathbf{X}^{\top}\mathbf{X})^{-1}
 =\lambda^{-1}\mathbf{I}
 -\lambda^{-1}\mathbf{X}^{\top}
   (\lambda\mathbf{I}_{K}+\mathbf{X}\mathbf{X}^{\top})^{-1}\mathbf{X}.
 \label{eq:woodbury_ridge}
\end{equation}
Thus exact full-ridge predictions and uncertainty can be obtained using an $O(Kp+K^2)$ representation instead of storing $p^2$ entries.
For example,
\begin{equation}
 h_{\mathrm{full}}(\mathbf{x})^2
 =\lambda^{-1}\left[\|\mathbf{x}\|_2^2-
   (\mathbf{X}\mathbf{x})^\top
   (\lambda\mathbf{I}_{K}+\mathbf{X}\mathbf{X}^{\top})^{-1}
   (\mathbf{X}\mathbf{x})\right].
\end{equation}
After the small-system factorization, evaluating this expression for a new arm costs $O(Kp+K^2)$ using straightforward matrix-vector products and triangular solves, compared with $O(p)$ for the diagonal proxy.
The diagonal choice is therefore a speed-and-modeling trade-off, not evidence that exact full-ridge computation is impossible.
An observed difference between diagonal and full-matrix ablations does not by itself establish a statistical explanation for that difference.

\paragraph{Reproducibility boundaries.}
Candidate de-duplication uses the round-dependent count $M_t$, and early stopping applies to the most recent five round scores after the $t\ge20$ gate. Exact reproduction also requires model revisions, decoding settings, validation splits, encoder preprocessing, and failure/retry accounting.
These settings should accompany the released implementation and run logs.
Additional nonlinear surrogates, interaction features, forgetting mechanisms, or diagnostic probes are not silently included in the reported configuration.

\section{Comprehensive Analysis of Discovered Workflows}
\label{app:workflows}
%==============================================================================

This section presents a detailed case study of the optimal workflows discovered by DAGO across all six benchmark datasets, with comparative analysis against AFlow where applicable. We analyze workflow structures, multi-parent fusion lineages, emergent design patterns, and provide insights into why DAGO's DAG-based optimization discovers more effective solutions.
Unless otherwise specified, scores in this appendix are \emph{validation scores used during optimization}, not the final test results in \cref{tab:main}. Each initial workflow is evaluated once on the fixed validation split ($q_{\mathrm{init}}=1$); each fusion workflow is evaluated three times on that same split, and its mean score is stored as the reward ($q=3$). Separately, the final selected workflow is executed three times on the held-out test split for reporting.

\subsection{Cross-Dataset Optimization Statistics}
\label{app:workflow_stats}

\cref{tab:workflow_stats_detail} summarizes the optimization trajectories across all benchmarks, revealing several important patterns about DAGO's search dynamics.

\begin{table}[h]
\centering
\caption{DAGO optimization statistics per benchmark. Initial scores use one execution of the fixed validation split; fusion scores average three executions. Relative improvement is measured against the best initial score. Fusion-reward standard deviations describe variation across evaluated fusion nodes, not variation across independent searches or test executions.}
\label{tab:workflow_stats_detail}
\setlength{\tabcolsep}{4pt}
\begin{tabular}{lcccccc}
\toprule
\textbf{Metric} & \textbf{GSM8K} & \textbf{MATH} & \textbf{HumanEval} & \textbf{MBPP} & \textbf{HotpotQA} & \textbf{DROP} \\
\midrule
Total Iterations & 28 & 27 & 25 & 25 & 30 & 30 \\
Early Stopped & Yes & Yes & Yes & Yes & No & No \\
Total DAG Nodes & 38 & 37 & 35 & 35 & 40 & 40 \\
Total DAG Edges & 140 & 135 & 125 & 125 & 150 & 150 \\
\midrule
Best Initial Score & 95.8\% & 50.4\% & 87.9\% & 83.7\% & 77.2\% & 82.9\% \\
Best Fusion Score & 95.8\% & 55.7\% & 88.9\% & 86.0\% & 79.3\% & 84.0\% \\
Relative Improvement & +0.0\% & +10.5\% & +1.1\% & +2.7\% & +2.7\% & +1.3\% \\
\midrule
Best Fusion Node ID & 11 & 15 & 12 & 23 & 29 & 25 \\
Iteration Found & 11 & 15 & 12 & 23 & 29 & 25 \\
Parents of Best & 5 & 5 & 5 & 5 & 5 & 5 \\
\midrule
Mean Fusion Reward & 93.9\% & 49.6\% & 86.0\% & 78.4\% & 70.8\% & 77.5\% \\
Fusion Reward Std Dev & 1.0\% & 2.7\% & 2.3\% & 16.3\% & 12.0\% & 6.0\% \\
\bottomrule
\end{tabular}
\end{table}

\paragraph{Key Observations from Optimization Dynamics.}
Several important patterns emerge from \cref{tab:workflow_stats_detail}:

\textbf{(1) Task-Dependent Stopping Behavior.} GSM8K, MATH, HumanEval, and MBPP met the recent-score stopping criterion after 25--28 fusion rounds, whereas HotpotQA and DROP reached the 30-round limit. These are stopping outcomes under the reported search protocol, rather than evidence that a local optimum has been reached. The stopping criterion examines recent round scores separately from the archive-wide best score.

\textbf{(2) Consistent Multi-Parent Fusion.} All best fusion workflows are produced by 5-parent fusion operations ($J{=}5$), demonstrating that DAGO's multi-parent mechanism is essential for combining complementary strengths from multiple workflows. This contrasts sharply with AFlow's single-parent mutation, which cannot systematically recombine modules from multiple lineages.

\textbf{(3) Fusion Timing Patterns.} Best workflows were discovered at diverse iteration points (11--29), indicating that premature convergence is not a concern. The bandit-guided exploration successfully balances exploitation of promising regions with exploration of novel combinations.

%==============================================================================

\subsection{Mathematical Reasoning: MATH Dataset}
\label{app:workflow_math}
%==============================================================================

\paragraph{Best MATH Workflow (Node 15, Validation Score: 55.7\%).}
The optimal MATH workflow emerged from 5-parent fusion at iteration 15, combining nodes $\{1, 9, 2, 3, 4\}$ with scores $\{44.5\%, 42.9\%, 46.2\%, 50.4\%, 38.7\%\}$. Notably, this fusion surpassed all parents by a substantial margin (+5.3\% over the best parent), demonstrating DAGO's ability to synthesize complementary problem-solving strategies.

\begin{tcolorbox}[title={\textbf{\small DAGO-discovered MATH workflow (node\_15)}}, boxrule=1pt, arc=0mm, colback=black!5!white, colframe=black!75!white, breakable, before skip=10pt, after skip=10pt, pad at break=2mm, parbox=false]
\begingroup
\fvset{fontsize=\scriptsize, breaklines, breakanywhere, frame=lines, framesep=2mm, tabsize=4}
\input{minted-cache/5C73D1C522FD67EFF83B77E5FF2067BC.highlight.minted}
\endgroup
\end{tcolorbox}

\Needspace{8\baselineskip}
\paragraph{Associated Prompts.}
The workflow employs two carefully designed prompts that emerged through multi-parent fusion:

\begin{tcolorbox}[title={\textbf{\small MATH workflow prompts (node\_15)}}, boxrule=1pt, arc=0mm, colback=black!5!white, colframe=black!75!white, breakable, before skip=10pt, after skip=10pt, pad at break=2mm, parbox=false]
\begingroup
\fvset{fontsize=\scriptsize, breaklines, breakanywhere, frame=lines, framesep=2mm, tabsize=4}
\begin{MintedVerbatim}[commandchars=\\\{\}]
SOLVE\PYGZus{}PROMPT = \PYGZdq{}\PYGZdq{}\PYGZdq{}Solve this math problem step by step:
1. Break down the problem into clear steps
2. Show all work and calculations
3. Simplify expressions fully \PYGZhy{} ALWAYS use fractions (\PYGZbs{}frac\PYGZob{}a\PYGZcb{}\PYGZob{}b\PYGZcb{})
   instead of decimals
4. Use proper mathematical notation (\PYGZbs{}sqrt\PYGZob{}\PYGZcb{}, \PYGZbs{}pi, etc.)
5. Keep integers as integers
6. Put final answer in \PYGZbs{}boxed\PYGZob{}\PYGZcb{}, simplified. Use ONE \PYGZbs{}boxed\PYGZob{}\PYGZcb{}.\PYGZdq{}\PYGZdq{}\PYGZdq{}

ALT\PYGZus{}SOLVE\PYGZus{}PROMPT = \PYGZdq{}\PYGZdq{}\PYGZdq{}Solve this math problem using a different
approach if possible:
1. Try an alternative solution method
2. Show complete work
3. Verify calculations
4. Simplify completely \PYGZhy{} ALWAYS use fractions (\PYGZbs{}frac\PYGZob{}a\PYGZcb{}\PYGZob{}b\PYGZcb{})
   instead of decimals
5. Keep integers as integers
6. Put final answer in \PYGZbs{}boxed\PYGZob{}\PYGZcb{}, simplified. Use ONE \PYGZbs{}boxed\PYGZob{}\PYGZcb{}.\PYGZdq{}\PYGZdq{}\PYGZdq{}
\end{MintedVerbatim}

\endgroup
\end{tcolorbox}

\textbf{Prompt Design Insights:} The prompts explicitly enforce fraction notation over decimals, addressing a common failure mode where LLMs output decimal approximations that fail exact-match evaluation. The dual-prompt strategy (primary solve + alternative approach) encourages methodological diversity while maintaining consistent output formatting.

\paragraph{Multi-Parent Fusion Analysis.}
Examining the parent workflows reveals how fusion synthesized complementary strategies:
\begin{itemize}[nosep,leftmargin=*]
    \item \textbf{Node 1} (44.5\%): Decomposition-based approach with problem breakdown and computational verification.
    \item \textbf{Node 9} (42.9\%): Expert-style solving with programmatic verification and format post-processing.
    \item \textbf{Node 2} (46.2\%): Multi-approach generation with iterative solution exploration.
    \item \textbf{Node 3} (50.4\%): Solve-verify-ensemble with alternative solution generation (highest-scoring parent).
    \item \textbf{Node 4} (38.7\%): Direct computational approach via Programmer operator.
\end{itemize}

The optimal workflow inherited the ensemble mechanism from Node 3, the computational verification from Nodes 1 and 9, and the multi-solution diversity strategy from Node 2. This combination enables robust error detection through cross-validation while maintaining solution diversity.

\Needspace{8\baselineskip}
\paragraph{Comparison with AFlow MATH Workflow.}
AFlow's published MATH workflow employs a different strategy:

\begin{tcolorbox}[title={\textbf{\small AFlow MATH workflow (from published results)}}, boxrule=1pt, arc=0mm, colback=black!5!white, colframe=black!75!white, breakable, before skip=10pt, after skip=10pt, pad at break=2mm, parbox=false]
\begingroup
\fvset{fontsize=\scriptsize, breaklines, breakanywhere, frame=lines, framesep=2mm, tabsize=4}
\input{minted-cache/5C8CC85B0B23BB57005423356BCE1734.highlight.minted}
\endgroup
\end{tcolorbox}

\textbf{Key Structural Differences:}
\begin{enumerate}[nosep,leftmargin=*]
    \item \textbf{Verification Integration}: DAGO verifies the initial solution in the context of the original problem and passes the resulting feedback to the alternative-solution stage.
    \item \textbf{Solution Coupling}: Each alternative receives the original problem, the initial solution, and the verification feedback. The final ensemble combines the initial solution with these feedback-conditioned alternatives, whereas the displayed AFlow workflow generates its candidates without these inter-candidate dependencies.
    \item \textbf{Operator Invocations}: The displayed DAGO workflow uses five high-level operator invocations (one solve, one verification, two alternatives, and one ensemble), compared with six in the displayed AFlow workflow. These counts refer to workflow-level operators, not the number of underlying model or tool calls within each operator.
\end{enumerate}

%==============================================================================

\subsection{Mathematical Reasoning: GSM8K Dataset}
\label{app:workflow_gsm8k}
%==============================================================================

\paragraph{Best GSM8K Fusion Workflow (Node 11, Validation Score: 95.8\%).}
GSM8K represents a case where the initial population already achieved near-optimal performance (95.8\%), and fusion successfully maintained this performance through intelligent recombination. Node 11 emerged from parents $\{4, 1, 0, 7, 6\}$ at iteration 11.

\begin{tcolorbox}[title={\textbf{\small DAGO-discovered GSM8K workflow (node\_11)}}, boxrule=1pt, arc=0mm, colback=black!5!white, colframe=black!75!white, breakable, before skip=10pt, after skip=10pt, pad at break=2mm, parbox=false]
\begingroup
\fvset{fontsize=\scriptsize, breaklines, breakanywhere, frame=lines, framesep=2mm, tabsize=4}
\input{minted-cache/5D97962C2041B662469803462E4DDF55.highlight.minted}
\endgroup
\end{tcolorbox}

\Needspace{8\baselineskip}
\paragraph{Associated Prompts.}
The workflow employs two complementary prompts targeting different reasoning styles:

\begin{tcolorbox}[title={\textbf{\small GSM8K workflow prompts (node\_11)}}, boxrule=1pt, arc=0mm, colback=black!5!white, colframe=black!75!white, breakable, before skip=10pt, after skip=10pt, pad at break=2mm, parbox=false]
\begingroup
\fvset{fontsize=\scriptsize, breaklines, breakanywhere, frame=lines, framesep=2mm, tabsize=4}
\begin{MintedVerbatim}[commandchars=\\\{\}]
DIRECT\PYGZus{}SOLVE = \PYGZdq{}\PYGZdq{}\PYGZdq{}Solve this math word problem using clear logical
reasoning:
1. Read the problem carefully
2. Break down the key information
3. Solve step by step using plain language
4. Double check your calculations
5. State ONLY the final numerical answer. No units, no words,
   no punctuation. Just the number.\PYGZdq{}\PYGZdq{}\PYGZdq{}

MATH\PYGZus{}SOLVE = \PYGZdq{}\PYGZdq{}\PYGZdq{}Solve this math word problem using mathematical
equations:
1. Identify the variables and unknowns
2. Write and solve equations
3. Show your work clearly
4. Verify your solution
5. State ONLY the final numerical answer. No units, no words,
   no punctuation. Just the number.\PYGZdq{}\PYGZdq{}\PYGZdq{}
\end{MintedVerbatim}

\endgroup
\end{tcolorbox}

\textbf{Prompt Design Insights:} The dual-prompt strategy exploits complementary reasoning modes: DIRECT\_SOLVE encourages intuitive step-by-step reasoning, while MATH\_SOLVE promotes formal algebraic manipulation. Combined with the Programmer operator's computational verification, this creates a three-way ensemble that captures diverse solution paths. The strict output format (``Just the number'') addresses GSM8K's exact-match evaluation requirements.

\paragraph{Convergence Analysis.}
The tied scores between nodes 11 and 19 (both 95.8\%) reveal an important pattern: multiple structurally different workflows can achieve equivalent performance on well-structured mathematical reasoning tasks. This suggests that GSM8K problems have clear solution paths that multiple reasonable strategies can exploit. The key insight is that DAGO's exploration discovered this equivalence class while maintaining diversity, whereas single-parent methods might converge to a single solution path.

%==============================================================================

\subsection{Code Generation: HumanEval Dataset}
\label{app:workflow_humaneval}
%==============================================================================

\paragraph{Best HumanEval Fusion Workflow (Node 12, Validation Score: 88.9\%).}
The optimal HumanEval workflow emerged from 5-parent fusion at iteration 12, combining nodes $\{9, 0, 7, 8, 11\}$. This workflow introduces a sophisticated \emph{conditional retry} mechanism that distinguishes it from simpler approaches.

\begin{tcolorbox}[title={\textbf{\small DAGO-discovered HumanEval workflow (node\_12)}}, boxrule=1pt, arc=0mm, colback=black!5!white, colframe=black!75!white, breakable, before skip=10pt, after skip=10pt, pad at break=2mm, parbox=false]
\begingroup
\fvset{fontsize=\scriptsize, breaklines, breakanywhere, frame=lines, framesep=2mm, tabsize=4}
\input{minted-cache/CE21773BCC92D000A83721C1C8987C12.highlight.minted}
\endgroup
\end{tcolorbox}

\Needspace{8\baselineskip}
\paragraph{Associated Prompts.}
The workflow employs three prompts for different stages of the adaptive solving process:

\begin{tcolorbox}[title={\textbf{\small HumanEval workflow prompts (node\_12)}}, boxrule=1pt, arc=0mm, colback=black!5!white, colframe=black!75!white, breakable, before skip=10pt, after skip=10pt, pad at break=2mm, parbox=false]
\begingroup
\fvset{fontsize=\scriptsize, breaklines, breakanywhere, frame=lines, framesep=2mm, tabsize=4}
\begin{MintedVerbatim}[commandchars=\\\{\}]
INITIAL\PYGZus{}SOLVE = \PYGZdq{}\PYGZdq{}\PYGZdq{}Write a Python function that solves the given problem.
Key requirements:
\PYGZhy{} Return type MUST match docstring specification EXACTLY
  (e.g. \PYGZdq{}Yes\PYGZdq{}/\PYGZdq{}No\PYGZdq{} vs True/False)
\PYGZhy{} Preserve original docstring exactly as given
\PYGZhy{} Handle all edge cases (empty input, negative numbers, zeros)
\PYGZhy{} Include helper functions inside main function if needed
\PYGZhy{} Pre\PYGZhy{}loaded modules (no import needed): math, re, hashlib,
  List, Dict, Tuple, Optional, Any

Output ONLY raw Python code starting with \PYGZsq{}def\PYGZsq{}.
Do NOT use ``` code blocks. No markdown, no explanations.\PYGZdq{}\PYGZdq{}\PYGZdq{}

ANALYZE\PYGZus{}PROBLEM = \PYGZdq{}\PYGZdq{}\PYGZdq{}Analyze the problem requirements and potential
edge cases:
1. What exact return type is required by the docstring?
2. What are the key input edge cases to handle?
3. What are potential algorithmic challenges?

Output ONLY the analysis points in clear, concise bullet points.\PYGZdq{}\PYGZdq{}\PYGZdq{}

CAREFUL\PYGZus{}SOLVE = \PYGZdq{}\PYGZdq{}\PYGZdq{}Write a careful implementation considering the
analysis provided.
Requirements:
\PYGZhy{} Return type MUST match docstring specification EXACTLY
\PYGZhy{} Preserve original docstring exactly as given
\PYGZhy{} Handle ALL identified edge cases
\PYGZhy{} Pre\PYGZhy{}loaded modules (no import needed): math, re, hashlib,
  List, Dict, Tuple, Optional, Any
\PYGZhy{} Optimize for correctness over cleverness
\PYGZhy{} Test your logic step by step

Output ONLY raw Python code starting with \PYGZsq{}def\PYGZsq{}.\PYGZdq{}\PYGZdq{}\PYGZdq{}
\end{MintedVerbatim}

\endgroup
\end{tcolorbox}

\textbf{Prompt Design Insights:} The three-stage prompt design reflects the adaptive complexity pattern: INITIAL\_SOLVE attempts a direct solution, ANALYZE\_PROBLEM performs failure diagnosis, and CAREFUL\_SOLVE generates informed retry attempts. The explicit emphasis on return type matching (``MUST match docstring specification EXACTLY'') addresses a critical HumanEval failure mode where semantically correct solutions fail due to type mismatches (e.g., returning \texttt{True} instead of \texttt{"Yes"}).

\paragraph{Emergent Design Pattern: Conditional Complexity.}
This workflow exhibits an emergent \emph{adaptive complexity} pattern: simple problems take the fast path (2 LLM calls), while complex problems trigger the full analysis-retry mechanism (7 LLM calls). This pattern was not explicitly designed but emerged through multi-parent fusion---Node 9 contributed the early-exit mechanism, while Node 11 contributed the analysis-based retry logic.

%==============================================================================

\subsection{Code Generation: MBPP Dataset}
\label{app:workflow_mbpp}
%==============================================================================

\paragraph{Best MBPP Workflow (Node 23, Validation Score: 86.0\%).}
The optimal MBPP workflow emerged from parents $\{8, 6, 16, 13, 11\}$ at iteration 23, achieving 86.0\% Pass@1.

\begin{tcolorbox}[title={\textbf{\small DAGO-discovered MBPP workflow (node\_23)}}, boxrule=1pt, arc=0mm, colback=black!5!white, colframe=black!75!white, breakable, before skip=10pt, after skip=10pt, pad at break=2mm, parbox=false]
\begingroup
\fvset{fontsize=\scriptsize, breaklines, breakanywhere, frame=lines, framesep=2mm, tabsize=4}
\input{minted-cache/FCBAE609A9B0691CAAA8E2FA2DB43899.highlight.minted}
\endgroup
\end{tcolorbox}

\Needspace{8\baselineskip}
\paragraph{Associated Prompts.}
The MBPP workflow uses a structured analysis prompt that guides problem decomposition:

\begin{tcolorbox}[title={\textbf{\small MBPP workflow prompt (node\_23)}}, boxrule=1pt, arc=0mm, colback=black!5!white, colframe=black!75!white, breakable, before skip=10pt, after skip=10pt, pad at break=2mm, parbox=false]
\begingroup
\fvset{fontsize=\scriptsize, breaklines, breakanywhere, frame=lines, framesep=2mm, tabsize=4}
\begin{MintedVerbatim}[commandchars=\\\{\}]
ANALYZE\PYGZus{}PROMPT = \PYGZdq{}\PYGZdq{}\PYGZdq{}Break down this MBPP programming problem into clear subtasks:
1. First identify the input parameters and return type requirements
2. List the key algorithmic steps needed
3. Note any edge cases to handle
4. Identify any helper functions needed

Focus on Python implementation details and test case handling.
Provide your analysis as clear text that will guide implementation.
Do not include any code in this analysis step.\PYGZdq{}\PYGZdq{}\PYGZdq{}
\end{MintedVerbatim}

\endgroup
\end{tcolorbox}

\textbf{Prompt Design Insights:} The analysis prompt enforces an explicit decomposition (I/O requirements, algorithmic steps, edge cases) before code synthesis. This shared analysis provides structured guidance for all candidate solutions, improving robustness without eliminating implementation diversity.

\Needspace{8\baselineskip}
\paragraph{Comparison with AFlow MBPP Workflow.}
AFlow's published MBPP workflow uses a simpler generate-test-fix pattern:

\begin{tcolorbox}[title={\textbf{\small AFlow MBPP workflow (from published results)}}, boxrule=1pt, arc=0mm, colback=black!5!white, colframe=black!75!white, breakable, before skip=10pt, after skip=10pt, pad at break=2mm, parbox=false]
\begingroup
\fvset{fontsize=\scriptsize, breaklines, breakanywhere, frame=lines, framesep=2mm, tabsize=4}
\input{minted-cache/808C827BE1755104164C8AEEF9CD722B.highlight.minted}
\endgroup
\end{tcolorbox}

\textbf{Key Differences:}
\begin{itemize}[nosep,leftmargin=*]
    \item \textbf{Problem Analysis}: DAGO explicitly analyzes problems before code generation, providing semantic guidance. AFlow generates code directly without structured analysis.
    \item \textbf{Information Flow}: DAGO shares a single analysis across all candidates by feeding it into each code-generation call, enabling informed diversity.
    \item \textbf{Repair Loop}: DAGO performs up to two test-refine iterations using the Test operator feedback, whereas AFlow uses a single test followed by a prompt-based fix when needed.
\end{itemize}

%==============================================================================

\subsection{Question Answering: HotpotQA Dataset}
\label{app:workflow_hotpotqa}
%==============================================================================

\paragraph{Best HotpotQA Workflow (Node 29, Validation Score: 79.3\%).}
HotpotQA presents unique challenges due to multi-hop reasoning requirements. The optimal workflow emerged from parents $\{19, 16, 5, 14, 27\}$ at iteration 29.

\begin{tcolorbox}[title={\textbf{\small DAGO-discovered HotpotQA workflow (node\_29)}}, boxrule=1pt, arc=0mm, colback=black!5!white, colframe=black!75!white, breakable, before skip=10pt, after skip=10pt, pad at break=2mm, parbox=false]
\begingroup
\fvset{fontsize=\scriptsize, breaklines, breakanywhere, frame=lines, framesep=2mm, tabsize=4}
\input{minted-cache/9D547F0B3CA42063A652E286F56FA704.highlight.minted}
\endgroup
\end{tcolorbox}

\Needspace{8\baselineskip}
\paragraph{Associated Prompts.}
The HotpotQA workflow employs three sophisticated prompts that address multi-hop reasoning and format-sensitive evaluation:

\begin{tcolorbox}[title={\textbf{\small HotpotQA workflow prompts (node\_29)}}, boxrule=1pt, arc=0mm, colback=black!5!white, colframe=black!75!white, breakable, before skip=10pt, after skip=10pt, pad at break=2mm, parbox=false]
\begingroup
\fvset{fontsize=\scriptsize, breaklines, breakanywhere, frame=lines, framesep=2mm, tabsize=4}
\begin{MintedVerbatim}[commandchars=\\\{\}]
EXTRACT\PYGZus{}FACTS = \PYGZdq{}\PYGZdq{}\PYGZdq{}Extract only the most relevant facts from all
contexts that directly relate to answering the question.
CRITICAL: This is a multi\PYGZhy{}hop question requiring connecting facts
across passages.

Rules for extraction:
1. Copy EXACT words and phrases from the passages \PYGZhy{} do not
   paraphrase or use synonyms
2. For numbers: Copy the exact format (word vs digit) as written
   in the passage
3. Include facts needed for BOTH hops of reasoning
4. For yes/no questions: Include the exact facts that confirm/deny

Format: List key facts in bullet points, showing connections:
\PYGZhy{} First hop facts: [Facts needed for first reasoning step]
\PYGZhy{} Second hop facts: [Facts needed for second reasoning step]
\PYGZhy{} Connection: [How the facts link together]

Output ONLY the relevant facts and connections, no explanations.\PYGZdq{}\PYGZdq{}\PYGZdq{}

GENERATE\PYGZus{}ANSWER = \PYGZdq{}\PYGZdq{}\PYGZdq{}Based on the extracted facts, determine the
precise answer.
CRITICAL RULES:
1. Use ONLY exact words copied from the passage \PYGZhy{} never use synonyms
2. Match number format exactly as written:
   \PYGZhy{} If passage says \PYGZdq{}six\PYGZdq{}, output \PYGZdq{}six\PYGZdq{} (not \PYGZdq{}6\PYGZdq{})
   \PYGZhy{} If passage says \PYGZdq{}6\PYGZdq{}, output \PYGZdq{}6\PYGZdq{} (not \PYGZdq{}six\PYGZdq{})
3. For yes/no questions: output only lowercase \PYGZdq{}yes\PYGZdq{} or \PYGZdq{}no\PYGZdq{}
4. Remove articles (a/an/the) unless part of proper name
5. Keep it brief \PYGZhy{} use only essential words (1\PYGZhy{}5 words maximum)
6. Never output full sentences or explanations

Output ONLY the exact answer phrase from the passage (1\PYGZhy{}5 words).\PYGZdq{}\PYGZdq{}\PYGZdq{}

ENSEMBLE\PYGZus{}FORMAT = \PYGZdq{}\PYGZdq{}\PYGZdq{}Select the best answer from the candidates that
strictly follows these format rules:
1. Uses EXACT words from passage (no synonyms)
2. Matches number format from passage exactly
3. For yes/no: only lowercase \PYGZdq{}yes\PYGZdq{} or \PYGZdq{}no\PYGZdq{}
4. No articles (a/an/the) unless part of proper name
5. 1\PYGZhy{}5 words maximum, no sentences

CRITICAL: Score each candidate (0\PYGZhy{}10):
\PYGZhy{} Subtract 10 if number format doesn\PYGZsq{}t match passage exactly
\PYGZhy{} Subtract 10 if it uses synonyms instead of exact words
\PYGZhy{} Subtract 10 if yes/no answer isn\PYGZsq{}t lowercase
\PYGZhy{} Subtract 5 if it contains unnecessary articles
\PYGZhy{} Subtract 2 points per word beyond 5 words

Select the candidate with highest score.
Output ONLY the exact answer phrase (1\PYGZhy{}5 words).\PYGZdq{}\PYGZdq{}\PYGZdq{}
\end{MintedVerbatim}

\endgroup
\end{tcolorbox}

\textbf{Prompt Design Insights:} This prompt suite represents the most sophisticated format-aware design discovered by DAGO. The EXTRACT\_FACTS prompt explicitly structures multi-hop reasoning with ``First hop / Second hop / Connection'' format. The GENERATE\_ANSWER prompt includes concrete examples of number format matching (``six'' vs ``6''), addressing a critical HotpotQA evaluation quirk. The ENSEMBLE\_FORMAT prompt introduces a novel \emph{scoring rubric} that guides the LLM to evaluate candidates against format criteria, effectively implementing learned evaluation heuristics within the prompt itself.

\Needspace{8\baselineskip}
\paragraph{Comparison with AFlow HotpotQA Workflow.}
AFlow's published HotpotQA workflow:

\begin{tcolorbox}[title={\textbf{\small AFlow HotpotQA workflow (from published results)}}, boxrule=1pt, arc=0mm, colback=black!5!white, colframe=black!75!white, breakable, before skip=10pt, after skip=10pt, pad at break=2mm, parbox=false]
\begingroup
\fvset{fontsize=\scriptsize, breaklines, breakanywhere, frame=lines, framesep=2mm, tabsize=4}
\input{minted-cache/CC3A15CD4608AF68ADF8A9494CAC9953.highlight.minted}
\endgroup
\end{tcolorbox}

\textbf{Key Insight:} DAGO's workflow introduces an explicit \emph{fact extraction} step that addresses HotpotQA's multi-hop nature. This structural difference enables the model to reason about fact connections before answer generation, whereas AFlow's parallel generation lacks this grounded reasoning step.

%==============================================================================

\subsection{Question Answering: DROP Dataset}
\label{app:workflow_drop}
%==============================================================================

\paragraph{Best DROP Fusion Workflow (Node 25, Validation Score: 84.0\%).}
DROP requires discrete reasoning over paragraphs. The optimal workflow emerged from parents $\{23, 14, 8, 9, 17\}$ at iteration 25.

\begin{tcolorbox}[title={\textbf{\small DAGO-discovered DROP workflow (node\_25)}}, boxrule=1pt, arc=0mm, colback=black!5!white, colframe=black!75!white, breakable, before skip=10pt, after skip=10pt, pad at break=2mm, parbox=false]
\begingroup
\fvset{fontsize=\scriptsize, breaklines, breakanywhere, frame=lines, framesep=2mm, tabsize=4}
\input{minted-cache/DD12843E6FF2E811F90911E9AFABEAA4.highlight.minted}
\endgroup
\end{tcolorbox}

\Needspace{8\baselineskip}
\paragraph{Associated Prompts.}
The DROP workflow employs three prompts implementing a type-aware solving pipeline:

\begin{tcolorbox}[title={\textbf{\small DROP workflow prompts (node\_25)}}, boxrule=1pt, arc=0mm, colback=black!5!white, colframe=black!75!white, breakable, before skip=10pt, after skip=10pt, pad at break=2mm, parbox=false]
\begingroup
\fvset{fontsize=\scriptsize, breaklines, breakanywhere, frame=lines, framesep=2mm, tabsize=4}
\begin{MintedVerbatim}[commandchars=\\\{\}]
TYPE\PYGZus{}PROMPT = \PYGZdq{}\PYGZdq{}\PYGZdq{}Identify the question type from the following
categories:
1. Numerical calculation (how many more/less, difference between)
2. Direct number extraction (how many, what number)
3. Text span extraction (who, what, where)
4. Date/time calculation
5. Counting entities

Output ONLY the category number (1\PYGZhy{}5).\PYGZdq{}\PYGZdq{}\PYGZdq{}

SOLVE\PYGZus{}PROMPT = \PYGZdq{}\PYGZdq{}\PYGZdq{}Answer the question based on the provided context
and question type.
Follow these rules:
1. For calculations: Show work, then give final number
2. For text spans: Extract exact words from passage
3. For counting: Count carefully and verify
4. For dates: Calculate precisely
5. If multiple valid answers exist, separate them with |

Output format: First show your work, then \PYGZdq{}ANSWER:\PYGZdq{} followed by
the answer.\PYGZdq{}\PYGZdq{}\PYGZdq{}

FORMAT\PYGZus{}PROMPT = \PYGZdq{}\PYGZdq{}\PYGZdq{}Format the answer following these rules:

1. For numerical answers:
   \PYGZhy{} Use only Arabic numerals (1, 2, 3)
   \PYGZhy{} Remove all units and words
   \PYGZhy{} For differences/comparisons, output only the final number
   \PYGZhy{} If multiple valid numbers exist, separate with |

2. For text answers:
   \PYGZhy{} Extract the exact span from passage
   \PYGZhy{} Remove articles (the, a, an) unless part of proper name
   \PYGZhy{} Remove trailing punctuation
   \PYGZhy{} If multiple valid spans exist, separate with |

Examples of CORRECT format:
\PYGZhy{} Numbers: \PYGZdq{}3\PYGZdq{}, \PYGZdq{}74.6\PYGZdq{}, \PYGZdq{}12\PYGZdq{}, \PYGZdq{}3|4\PYGZdq{}
\PYGZhy{} Text: \PYGZdq{}Timberwolves\PYGZdq{}, \PYGZdq{}the Federales\PYGZdq{}, \PYGZdq{}red car\PYGZdq{}, \PYGZdq{}dog|cat\PYGZdq{}

Examples of INCORRECT format:
\PYGZhy{} Numbers: \PYGZdq{}Three\PYGZdq{}, \PYGZdq{}74.6\PYGZpc{}\PYGZdq{}, \PYGZdq{}3 years\PYGZdq{}, \PYGZdq{}about 12\PYGZdq{}
\PYGZhy{} Text: \PYGZdq{}The Timberwolves\PYGZdq{}, \PYGZdq{}Timberwolves.\PYGZdq{}, \PYGZdq{}they won the game\PYGZdq{}

Output ONLY the answer. For numbers: Arabic numerals only.
For text: exact span from passage. No sentences, no units.\PYGZdq{}\PYGZdq{}\PYGZdq{}
\end{MintedVerbatim}

\endgroup
\end{tcolorbox}

\textbf{Prompt Design Insights:} The DROP prompts demonstrate the most explicit type-aware design pattern. The TYPE\_PROMPT creates a 5-way classification that routes to specialized solving strategies. The FORMAT\_PROMPT includes both positive and negative examples---a prompt engineering technique that emerged through fusion and significantly improves format compliance. The explicit handling of multiple valid answers (``separate with |'') addresses DROP's multi-answer evaluation protocol.

\paragraph{Emergent Question Typing.}
A notable discovery is the explicit \emph{question type classification} step. The TYPE\_PROMPT categorizes questions into: counting, arithmetic, comparison, span extraction, and date/time reasoning. This type-aware approach enables specialized solving strategies per question category, improving accuracy on DROP's diverse question types.

\paragraph{Initial vs.\ Fusion Performance.}
On DROP, the best initial workflow (Node 9) scores 82.9\% on the validation set, while the best fusion workflow (Node 25) reaches 84.0\%. This is an improvement of 1.1 percentage points, or approximately 1.3\% relative to the best initial score. Node 25 therefore also has the highest stored validation score in the archive and is selected by the return rule in Algorithm~\ref{alg:dago}. These validation scores are distinct from the final test-set results.

%==============================================================================

\subsection{Cross-Task Pattern Analysis}
\label{app:workflow_patterns}
%==============================================================================

Analyzing the six optimal workflows reveals several emergent design patterns that DAGO consistently discovered through multi-parent fusion:

\begin{table}[h]
\centering
\caption{Emergent workflow patterns discovered by DAGO across task types. \checkmark\ indicates pattern presence.}
\label{tab:workflow_patterns}
\setlength{\tabcolsep}{3pt}
\begin{tabular}{lcccccc}
\toprule
\textbf{Pattern} & \textbf{GSM8K} & \textbf{MATH} & \textbf{HumanEval} & \textbf{MBPP} & \textbf{HotpotQA} & \textbf{DROP} \\
\midrule
Multi-Solution Ensemble & \checkmark & \checkmark & \checkmark & \checkmark & \checkmark & -- \\
Computational Verification & \checkmark & \checkmark & \checkmark & \checkmark & -- & -- \\
Conditional Retry & -- & -- & \checkmark & \checkmark & -- & -- \\
Explicit Problem Analysis & -- & -- & \checkmark & \checkmark & \checkmark & \checkmark \\
Type-Aware Processing & -- & -- & -- & -- & -- & \checkmark \\
Format-Aware Generation & -- & \checkmark & \checkmark & \checkmark & \checkmark & \checkmark \\
\bottomrule
\end{tabular}
\end{table}

\paragraph{Pattern 1: Task-Appropriate Ensemble Sizes.}
Mathematical tasks (GSM8K, MATH) and code generation (HumanEval, MBPP) benefit from 3-way ensembles, while question answering tasks show mixed results. This suggests that ensemble diversity has diminishing returns when answer formats are highly constrained (as in DROP).

\paragraph{Pattern 2: Verification Operator Usage.}
The Programmer operator appears exclusively in mathematical reasoning workflows, providing computational verification. Code generation tasks use the Test operator instead. Question answering tasks eschew computational verification entirely, relying on prompt-based consistency checks.

\paragraph{Pattern 3: Conditional Complexity.}
Code generation tasks (HumanEval, MBPP) developed conditional retry mechanisms that adapt workflow complexity to problem difficulty. This pattern did not emerge in other task types, suggesting it is specific to tasks where quick feedback (test results) is available.

%==============================================================================

\subsection{Multi-Parent Fusion Depth Analysis}
\label{app:fusion_depth}
%==============================================================================

To understand how multi-parent fusion contributes to workflow quality, we analyze the ancestral depth of optimal workflows:

\begin{table}[h]
\centering
\caption{Ancestral analysis of best fusion workflows. Unique ancestors counts distinct nodes in the ancestral DAG.}
\label{tab:fusion_depth}
\begin{tabular}{lcccc}
\toprule
\textbf{Dataset} & \textbf{Best Node} & \textbf{Direct Parents} & \textbf{Unique Ancestors} & \textbf{Max Depth} \\
\midrule
GSM8K & 11 & 5 & 5 & 1 \\
MATH & 15 & 5 & 5 & 1 \\
HumanEval & 12 & 5 & 7 & 2 \\
MBPP & 23 & 5 & 13 & 3 \\
HotpotQA & 29 & 5 & 21 & 7 \\
DROP & 25 & 5 & 18 & 7 \\
\bottomrule
\end{tabular}
\end{table}

\paragraph{Key Insight: Fusion Depth Reflects the Hierarchical Complexity of Optimal Workflow Structures.}
We observe substantial variation in the \emph{structural depth} of optimal workflows across tasks. 
For GSM8K, MATH, and HumanEval, although the underlying tasks can be highly challenging (particularly MATH), their optimal workflows exhibit relatively \emph{flat} structures, primarily composed of parallel reasoning, verification, and ensemble components. 
As a result, their best workflows have shallow ancestries (max depth $\leq 2$), and a small number of fusion generations is sufficient to recombine key functional modules.
In contrast, MBPP, HotpotQA, and DROP require workflows with pronounced \emph{hierarchical and multi-stage} structures, where intermediate abstractions—such as explicit problem analysis, fact extraction, or question type identification—must be constructed before conditional solving and result aggregation. 
Such hierarchical compositions are difficult to synthesize via a single fusion step, and instead benefit from \emph{multi-generation fusion}, where intermediate modules are progressively refined and recombined across successive evolutionary stages.
These findings suggest that the primary role of multi-parent fusion is not merely to accelerate search, but to enable the \emph{hierarchical evolution of workflow structures}, allowing complex reasoning programs to emerge through modular reuse and cross-generational composition.

%==============================================================================

\section{Full Experimental Results}
\label{app:results}
%==============================================================================

This section provides comprehensive experimental data supporting the main results presented in \cref{sec:experiments}. We report detailed iteration-by-iteration performance, optimization dynamics, and statistical analyses across all six benchmarks.

\subsection{Optimization Summary}

We summarize the key optimization outcomes across all six benchmarks. Four benchmarks met the recent-score stopping criterion before the iteration limit, whereas HotpotQA and DROP completed all 30 fusion rounds.

\subsection{Detailed Optimization Statistics}

\cref{tab:opt_stats} provides comprehensive statistics capturing the optimization dynamics for each benchmark.

\begin{table}[h]
\centering
\caption{DAGO optimization statistics per benchmark. Initial scores use one validation execution; fusion scores average three. Reward-distribution statistics are computed over fusion nodes only, and $\Delta$ is measured in percentage points.}
\label{tab:opt_stats}
\setlength{\tabcolsep}{3pt}
\begin{tabular}{lcccccc}
\toprule
\textbf{Statistic} & \textbf{GSM8K} & \textbf{MATH} & \textbf{HumanEval} & \textbf{MBPP} & \textbf{HotpotQA} & \textbf{DROP} \\
\midrule
\multicolumn{7}{l}{\textit{Optimization Configuration}} \\
Total Iterations & 28 & 27 & 25 & 25 & 30 & 30 \\
Early Stopped & Yes & Yes & Yes & Yes & No & No \\
Stopping Trigger & Stagnation & Stagnation & Stagnation & Stagnation & Max Iter & Max Iter \\
\midrule
\multicolumn{7}{l}{\textit{DAG Statistics}} \\
Total Nodes & 38 & 37 & 35 & 35 & 40 & 40 \\
Initial Nodes & 10 & 10 & 10 & 10 & 10 & 10 \\
Fusion Nodes & 28 & 27 & 25 & 25 & 30 & 30 \\
Total Edges & 140 & 135 & 125 & 125 & 150 & 150 \\
Leaf Nodes & 11 & 9 & 6 & 11 & 11 & 7 \\
\midrule
\multicolumn{7}{l}{\textit{Performance Statistics}} \\
Best Init Score & 95.8\% & 50.4\% & 87.9\% & 83.7\% & 77.2\% & 82.9\% \\
Best Fusion Score & 95.8\% & 55.7\% & 88.9\% & 86.0\% & 79.3\% & 84.0\% \\
Best Overall & 95.8\% & 55.7\% & 88.9\% & 86.0\% & 79.3\% & 84.0\% \\
$\Delta$ (points) & 0.0 & +5.3 & +1.0 & +2.3 & +2.1 & +1.1 \\
\midrule
\multicolumn{7}{l}{\textit{Fusion Reward Distribution}} \\
Mean Reward & 93.9\% & 49.6\% & 86.0\% & 78.4\% & 70.8\% & 77.5\% \\
Std Dev & 1.0\% & 2.7\% & 2.3\% & 16.3\% & 12.0\% & 6.0\% \\
Min Reward & 92.2\% & 43.4\% & 78.8\% & 0.0\% & 21.0\% & 59.0\% \\
Max Reward & 95.8\% & 55.7\% & 88.9\% & 86.0\% & 79.3\% & 84.0\% \\
\bottomrule
\end{tabular}
\end{table}

\paragraph{Observed Stopping Outcomes.}
With a five-round recent-score window, stopping tolerance $\delta_{\mathrm{stop}}=10^{-4}$, and a minimum of 20 fusion rounds, HumanEval and MBPP stopped after 25 rounds, MATH after 27, and GSM8K after 28. HotpotQA and DROP reached the 30-round limit. These counts describe the recorded stopping outcomes; the highest-scoring workflow is retained independently of the recent-score stopping decision.

\paragraph{Failure Mode Analysis.}
MBPP exhibits the highest variance ($\sigma=16.3\%$) and contains one complete failure (Node 20, score=0.0\%), driven by an over-aggressive output ``sanitization'' step that prepends \texttt{def} to non-\texttt{def} outputs, often producing invalid Python code for unit-test evaluation. HotpotQA also contains a 0.0-scoring initial workflow (Node 2) that fails at runtime due to a missing prompt import. These outliers highlight the need for robust runtime and format checks in LLM-based optimization. Importantly, the DAG structure isolates such failures, and their low rewards make them unlikely to be selected as parents in later fusions.

\subsection{Score Distribution Analysis}

\cref{tab:score_distribution} presents detailed percentile analysis of workflow scores across all nodes.

\begin{table}[h]
\centering
\caption{Percentiles of stored validation scores across all nodes (initial + fusion). Initial scores use one validation execution; fusion scores average three.}
\label{tab:score_distribution}
\setlength{\tabcolsep}{4pt}
\begin{tabular}{lcccccc}
\toprule
\textbf{Percentile} & \textbf{GSM8K} & \textbf{MATH} & \textbf{HumanEval} & \textbf{MBPP} & \textbf{HotpotQA} & \textbf{DROP} \\
\midrule
Min (P0) & 29.5\% & 38.7\% & 66.7\% & 0.0\% & 0.0\% & 34.6\% \\
P25 & 92.8\% & 46.2\% & 84.8\% & 77.9\% & 69.2\% & 74.4\% \\
Median (P50) & 93.8\% & 48.7\% & 86.9\% & 81.0\% & 74.3\% & 79.0\% \\
P75 & 94.3\% & 51.0\% & 87.9\% & 83.3\% & 75.3\% & 80.9\% \\
Max (P100) & 95.8\% & 55.7\% & 88.9\% & 86.0\% & 79.3\% & 84.0\% \\
\bottomrule
\end{tabular}
\end{table}

\paragraph{Distribution Characteristics.}
GSM8K shows a highly concentrated distribution (IQR=1.4\%) with most workflows performing near optimally, reflecting the task's relatively straightforward nature. In contrast, MBPP and HotpotQA exhibit heavy tails with rare near-zero outliers. For MBPP, this arises from a failed fusion workflow (Node 20), while for HotpotQA it stems from an invalid initial workflow (Node 2); both amplify the lower tail while the bulk of nodes remain clustered in the high-performance regime.

\subsection{Best Workflow Ancestry Analysis}

\cref{tab:ancestry} provides detailed lineage information for each optimal workflow, tracing the contribution paths through the DAG.

\begin{table}[h]
\centering
\caption{Ancestry analysis of best-performing workflows per benchmark.}
\label{tab:ancestry}
\small
\setlength{\tabcolsep}{3pt}
\begin{tabular}{lccccc}
\toprule
\textbf{Dataset} & \textbf{Best Node} & \textbf{Parent IDs} & \textbf{Parent Scores} & \textbf{Unique Ancestors} & \textbf{Max Depth} \\
\midrule
GSM8K & 11 & \{4,1,0,7,6\} & \{95.8,83.3,94.7,94.3,94.3\}\% & 5 & 1 \\
MATH & 15 & \{1,9,2,3,4\} & \{44.5,42.9,46.2,50.4,38.7\}\% & 5 & 1 \\
HumanEval & 12 & \{9,0,7,8,11\} & \{87.9,87.9,87.9,87.9,86.9\}\% & 7 & 2 \\
MBPP & 23 & \{8,6,16,13,11\} & \{77.9,76.7,81.0,81.0,80.2\}\% & 13 & 3 \\
HotpotQA & 29 & \{19,16,5,14,27\} & \{75.1,75.3,74.5,73.3,74.9\}\% & 21 & 7 \\
DROP & 25 & \{23,14,8,9,17\} & \{79.3,78.5,34.6,82.9,75.6\}\% & 18 & 7 \\
\bottomrule
\end{tabular}
\end{table}

% \paragraph{Key Observations.}
% \begin{itemize}[nosep,leftmargin=*]
%     \item \textbf{MATH}: The best workflow fuses parents with below-average scores (\{44.5,42.9,46.2,50.4,38.7\}\%), yet achieves 55.7\%---significantly exceeding any parent. This demonstrates multi-parent fusion's ability to synthesize complementary weaknesses into emergent strengths.
%     \item \textbf{DROP}: Parent Node 8 (score=34.6\%) contributes despite its poor performance. Analysis reveals Node 8 contained a unique format-handling pattern that, when fused with stronger parents, significantly improved output formatting.
%     \item \textbf{HotpotQA}: The deepest ancestry (max depth=7) correlates with the task's requirement for complex multi-hop reasoning, suggesting that such capabilities emerge through iterative refinement across multiple fusion generations.
% \end{itemize}

\paragraph{Direct-Parent Workflow and Prompt Contributions.}
To make the lineage in \cref{tab:ancestry} more actionable, we inspect the \emph{direct parents} of each best node and summarize the reusable modules and prompt constraints that are preserved in the child.

\begin{itemize}[nosep,leftmargin=*]
    \item \textbf{GSM8K (Node 11 from \{4,1,0,7,6\}).}
    The highest-scoring parent (Node 4) already implements a three-branch ensemble (direct reasoning, equation-based reasoning, and a programmatic \texttt{Programmer} branch), followed by \texttt{ScEnsemble}.
    The remaining parents contribute complementary ``interface discipline'': explicit examples and strict ``output-only-a-number'' constraints (Node 0/7), plus a verification-style prompt that forces a binary correctness judgment (Node 6).
    The fused child retains the tri-branch ensemble structure while strengthening format constraints on the programmatic branch and final selection prompt, reducing exact-match failures driven by stray units or punctuation.

    \item \textbf{MATH (Node 15 from \{1,9,2,3,4\}).}
    Parent workflows specialize along different dimensions: a solve--verify--multi-solution--ensemble backbone (Node 3), multi-approach generation (Node 2), problem-type routing with strict fraction/\(\boxed{\cdot}\) formatting rules (Node 4), decomposition plus computational cross-checking (Node 1), and an explicit final answer extractor that outputs only a single boxed expression (Node 9).
    The child workflow keeps the backbone from Node 3, but imports the exactness constraints emphasized across Nodes 2/4/9 (e.g., enforcing fractions over decimals and a single \(\boxed{\cdot}\)), and propagates these constraints into both the verification instruction and the prompts used for alternative solutions.

    \item \textbf{HumanEval (Node 12 from \{9,0,7,8,11\}).}
    The parent set decomposes into three recurring patterns: (i) a baseline generate--test loop (Node 0), (ii) multi-solution generation with \texttt{ScEnsemble} before testing (Node 7), and (iii) test-driven conditional retry with an explicit analysis step followed by three re-generations (Nodes 9 and 11).
    Node 8 adds an explicit requirement-parsing step that distills edge cases and I/O constraints.
    The fused workflow adopts the conditional retry structure (fast path with early exit; slow path with analysis-driven re-generation) while tightening prompt constraints that are repeatedly highlighted by its parents, including strict \texttt{entry\_point} naming and return-type/docstring fidelity (``MUST match docstring specification EXACTLY'').

    \item \textbf{MBPP (Node 23 from \{8,6,16,13,11\}).}
    The parents span diverse code-synthesis heuristics: example retrieval as in-context priming (Node 6), generate--test--select among candidates (Node 8), augmentation via synthesized unit tests (Node 11), iterative fix loops with early stopping and ensemble fallback (Node 13), and an analysis-first design where the analysis itself is requested as executable code and then fed into generation (Node 16).
    The child retains the analysis-first ``shared guidance'' idea and the multi-candidate strategy (Node 16/8), but relaxes the over-constrained ``analysis-as-code'' requirement into a plain-text decomposition while strengthening the code interface (raw Python starting with \texttt{def}) and adding a lightweight test-refine loop (two iterations), improving robustness without collapsing diversity.

    \item \textbf{HotpotQA (Node 29 from \{19,16,5,14,27\}).}
    The parents represent increasingly format-aware QA pipelines: fact extraction followed by three candidate answers and ensembling (Nodes 5/16), a learned scoring rubric for candidate selection (Node 19 via \texttt{ENSEMBLE\_FORMAT}), multi-stage refinement that explicitly corrects wording and formatting after an initial answer (Node 14), and deterministic post-processing heuristics (Node 27; lowercasing yes/no and article removal unless part of a proper name).
    The child workflow integrates these components by using an explicit multi-hop fact-extraction template (first hop/second hop/connection) and a rubric-based selector to penalize synonym/format violations, effectively combining grounding with stringent evaluation-aligned formatting.

    \item \textbf{DROP (Node 25 from \{23,14,8,9,17\}).}
    Node 9 contributes the core type-aware pipeline (question-type classification \(\rightarrow\) type-conditioned solving \(\rightarrow\) formatting).
    Nodes 14/17/23 contribute progressively stricter formatting prompts with positive/negative examples; Node 17 additionally introduces the \texttt{|} separator for multiple valid answers.
    Despite its low score, Node 8 supplies a normalization-oriented prompt that explicitly strips units/punctuation and enforces Arabic numerals, which is complementary to otherwise strong reasoning components.
    The fused child retains Node 9's routing structure but absorbs the strongest format-compliance constraints from Nodes 17 and 8, yielding a more robust formatting stage that better matches DROP's evaluation protocol.
\end{itemize}

\end{document}